\theoremstyle{plain}
\newtheorem{theorem}{Theorem}[section]
\newtheorem{lemma}[theorem]{Lemma}
\theoremstyle{definition}
\newtheorem{definition}[theorem]{Definition}
\theoremstyle{remark}
\newtheorem{remark}[theorem]{Remark}
\newcommand{\Reeb}{{\mathrm R}}
\newcommand{\Clus}{\text{Clus}}
\newcommand{\Xset}{{\mathcal X}}
\newcommand{\R}{\mathbb{R}}
\newcommand{\Xs}{\mathbb{X}}
\newcommand{\Ys}{\mathbb{Y}}
\newcommand{\interv}{I}
\newcommand{\Metrsp}{X}
\newcommand{\loss}{\ell}
\newcommand{\PD}{PD}
\newcommand{\MapComp}{\text{MapComp}}
\newcommand{\MapPers}{\text{MapPers}}
\newcommand{\VertexFilt}{\text{VertexFilt}}
\newcommand{\SubFilt}{\text{SubFilt}}
\newcommand{\Persistence}{\text{Persistence}}
\newcommand{\nbMC}{M}
\newcommand{\muXs}{\mu}
\newcommand{\Prob}{\mathbb P}
\newcommand{\ParSpace}{\R^s}
\title{Differentiable Mapper for Topological Optimization of Data Representation}
\author{ Ziyad Oulhaj\\
    Nantes Universit\'e, Centrale Nantes \\
    Laboratoire de Math\'ematiques Jean Leray, CNRS UMR 6629\\
	Nantes, France\\
	\texttt{ziyad.oulhaj@ec-nantes.fr} \\
	\And
	Mathieu Carrière \\
	DataShape\\
	Centre Inria d'Université Côte d'Azur\\
	Sophia Antipolis, France \\
	\texttt{mathieu.carriere@inria.fr} \\
    \And
	Bertrand Michel \\
    Nantes Universit\'e, Centrale Nantes \\
Laboratoire de Math\'ematiques Jean Leray, CNRS UMR 6629\\
	Nantes, France\\
	\texttt{bertrand.michel@ec-nantes.fr} \\
}
\date{}
\begin{document}
\maketitle

\begin{abstract}
Unsupervised data representation and visualization using tools from topology is an active and growing field 
of Topological Data Analysis (TDA) 
and data science.
Its most prominent line of work is based on the so-called {\em Mapper graph},
which is a combinatorial graph 
whose topological structures (connected components, branches, loops) are in correspondence with those of the data itself.
While highly generic and applicable, its use has been hampered so far by the manual tuning of its many
parameters---among these, a crucial
one is the so-called {\em filter}: it is a continuous function whose variations on the data set are the main
ingredient for both building the Mapper representation and assessing the presence and sizes of its topological structures.
However, while a few parameter tuning methods have already been investigated for the other Mapper parameters (i.e., resolution, gain, clustering), 
there is currently no method for tuning the filter itself. In this work, we build on a recently proposed 
optimization framework incorporating topology to provide the first filter optimization scheme for Mapper graphs.
In order to achieve this, we propose a relaxed and more general version of the Mapper graph, whose convergence
properties are investigated. Finally, we demonstrate the usefulness of our approach by optimizing Mapper 
graph representations on several datasets, and showcasing the superiority of the optimized representation over arbitrary ones.
\end{abstract}

\keywords{ Mapper Graph, Data Visualization, Topological Data Analysis, Persistent Homology}

\section{Introduction}


\paragraph*{Mapper graphs and TDA.} The Mapper graph introduced in \cite{singh2007topological} is an essential tool of Topological Data Analysis (TDA), and has been used many times for visualization purposes
on different types of data, including, but not limited to, single-cell sequencing~\cite{wang2018topological,zechel2014topographical}, neural network architectures~\cite{9605336,joseph2021topological}, or 3D meshes~\cite{wang2020exploration,rosen2018inferring}.
Moreover, its ability to precisely encode (within the graph) the presence and sizes of geometric and topological structures in the data in a mathematically founded way
(through the use of algebraic topology) has also proved beneficial for highlighting subpopulations of interest, which are usually detected as peculiar
topological structures of significant sizes, and identifying the key features that best explain such subpopulations against the rest of the Mapper graph.
This general pipeline has become a key component in, e.g., biological inference in single-cell data sets, where differentiating stem cells can usually
be recovered from branching patterns in the corresponding Mapper graphs~\cite{Rizvi2017}.

\paragraph*{Parameter selection.} However, it has quickly become clear that the Mapper graph is quite sensitive to its parameters, in the
sense that the structure of the graph can vary a lot under (even small) changes of its parameters. As such, several pipelines based on Mapper
graphs actually involve brute force optimization: they first compute a grid of Mapper graphs corresponding to many different sets of parameters,
and then they pick the best one, either by manual inspection or with arbitrary criteria---leading to prohibitive running times.
In order to deal with this issue, several methods have been proposed in the literature for either assessing the statistical robustness of a given Mapper graph with respect to the distribution of the studied dataset~\cite{belchi2020numerical, Brown2021}, or for tuning the Mapper parameters automatically~\cite{carriere2018statistical}. 
Unfortunately, most tuning methods
involve simple heuristics that only work for some, but not all Mapper parameters; in particular, the so-called {\em filter parameter} has never been treated, to the best of our knowledge. This is mostly because it is a general continuous function, and can thus vary in a much wilder parameter space than the other Mapper parameters.

In another line of work, ensemble methods have recently been proposed to combine Mapper graphs over multiple parameter sets, rather than trying to find the best one \cite{kang2021ensemble,fitzpatrick2023ensemble},
so as to be able to produce outputs that are more robust. However, this imposes to aggregate families of completely different filter functions, with no guarantees on the resulting graph. 
In this work, we follow a different approach, and rather attempt at identifying an "optimal" filter function by minimizing specific loss functions.

Another approach related to our work is \cite{bui2020f}, where an alternative way of constructing Mapper graphs is proposed using a fuzzy clustering algorithm. 
Even though we also adopt a probabilistic approach (that allows, e.g., a point to belong to disconnected intervals in the cover of the filter range), the underlying probabilistic formalism 
that we use is new, while there is none in \cite{bui2020f}. In particular, we introduce stochastic {\it assignment schemes} and we address the parameter selection problem within this framework.

\paragraph*{Contributions.} Our contribution is three-fold:
\begin{itemize}
    \item We introduce {\em Soft Mapper}: a generalization of the combinatorial Mapper graph in the form of a probability distribution on Mapper graphs,
    \item We propose a filter optimization framework adapted to a smooth Soft Mapper distribution with provable convergence guarantees,
    \item We implement and showcase the efficiency of Mapper filter optimization through Soft Mapper on various data sets, with public, open-source code in \texttt{TensorFlow}. 
\end{itemize}

The following of the article is organized as follows: in Section~\ref{sec:background} we recall the basics on the Mapper algorithm, then in Section~\ref{sec:soft_mapper} we detail the Soft Mapper construction, which is the main  focus of this work. We provide several interesting special cases of Soft Mapper in Section~\ref{sec:examples}, before introducing topological losses that are specific to Mapper graphs in Section~\ref{sec:tpological_loss}. We then present our optimization setting, in which a parameterized family of Mapper filter functions is optimized, in Section~\ref{sec:optim}, and we 
apply it on
3-dimensional shapes and single cell RNA-sequencing data in Section~\ref{sec:experiments}. Finally, we discuss the results of this article and present possible future work directions in Section~\ref{sec:discussion}.

\section{Background on Reeb and Mapper graphs}\label{sec:background}

\paragraph*{Reeb graphs.} Mapper graphs can be understood as numerical approximations of {\em Reeb graphs}, that we now define.
Let $X$ be a topological space and let $f\,:\,X\rightarrow \mathbb{R}$ be a continuous 
function called \emph{filter function}. Let $\sim_{f}$ be the equivalence relation between two elements $x$ and $y$ in $X$ defined by: $x\sim_{f} y$ if and only if $x$ and $y$ are in the same connected component of $f^{-1}(z)$ for some $z$ in $f(X)$. 
 The Reeb graph $\Reeb_f(X)$ of  $X$ 
is then simply defined as the quotient space $X/\sim_{f}$.

\paragraph*{Mapper graphs.} The Mapper was introduced in \cite{singh2007topological} as a discrete and computable version of the Reeb graph $\Reeb_f(\Xset)$. 
Assume that we are given a point cloud 
$\Xs_n= \{X_1,\dots,X_n\}\subseteq \Xset$ with known pairwise dissimilarities, as well as  
a filter function $f$ 
computed on each point of  $\Xs_n$. The Mapper graph can then be computed with the following
generic version of the Mapper algorithm:
\begin{enumerate}
\item Cover the range of values $\Ys_n = f(\Xs_n)$ with a set of consecutive intervals $\interv_1,\dots, \interv_r$  that overlap, i.e., one has $I_i\cap I_{i+1}\neq \varnothing$ for all $1\leq i \leq r-1$.
\item Apply a clustering algorithm to each pre-image $f^{-1}(\interv_j)$, $j\in\{1,...,r\}$. This defines a {\em pullback cover}
$\mathcal{C}=\{\mathcal{C}_{1,1},\dots,\mathcal{C}_{1,k_1},\dots,\mathcal C_{r,1},\dots,\mathcal{C}_{r,k_r}\}$ of 
$\Xs_n$. 
\item The Mapper graph is defined as the {\em nerve} of $\mathcal{C}$. Each node $v_{j,k}$ of the Mapper graph corresponds to an element $\mathcal{C}_{j,k}$ of $\mathcal C$, 
and two nodes $v_{j,k}$ and $v_{j',k'}$ are connected by an edge if and only if  $\mathcal{C}_{j,k} \cap \mathcal{C}_{j',k'} \neq \varnothing$.
\end{enumerate}

\section{Soft Mapper construction}\label{sec:soft_mapper}

In this section, we introduce our new construction {\em Soft Mapper}, which generalizes Mapper graphs and can be used for non-convex optimization.
In order to do so, we first
provide a
general formalization of the Mapper construction that 
does {\em not} require
overlapping intervals and filter functions. 
Then, we use this formalization to 
define \emph{Soft Mapper}, which essentially consists in a distribution on regular Mapper graphs. 

\subsection{Mapper graphs built on latent cover assignments}

Let $\Xs_n=\{x_1,...,x_n\}$ be a point cloud lying in a metric space $(\Metrsp,d)$ and let $r \in \mathbb N^\star$. 
For instance, $\Xs_n$  can be obtained from sampling $\Metrsp^n$ according to some distribution $\muXs$.
Then, let $\Clus$ be a clustering algorithm on $(\Metrsp,d)$, that is assumed to be fixed in the following of this work.

\paragraph*{Latent cover assignments.} Any binary matrix $e \in {\{0,1\}}^{n\times r}$ is then called an \emph{r-latent cover assignment} of $\Xs_n$, where $e_{i,j}=1$ must be understood as point $x_i$ belonging to the $j$-th element of a \emph{latent cover} of the data.  For instance, in the standard version of Mapper presented in Section~\ref{sec:background}, the latent cover is obtained from a family of pre-images of intervals that cover the range of the filter function.

The procedure to compute a Mapper graph given an $r$-latent cover assignment 
$e \in {\{0,1\}}^{n\times r}$
is straightforward: simply replace $f^{-1}(\interv_j)$ by $\{x_i\,:\,e_{i,j}=1\}$ in the generic Mapper algorithm in Section~\ref{sec:background}, then derive the pullback cover using the clustering algorithm $\Clus$, and finally compute the Mapper graph as the nerve of the pullback cover.


\paragraph*{Mapper function.} Let $\mathbb{K}$ be the set of simplicial complexes of dimension less or equal to $1$ (i.e., graphs) and such that their sets of vertices (i.e., their $0-$skeletons) are subsets of the power set $\mathcal{P}(\Xs_n)$. We define the Mapper complex generating function as:
$$
\MapComp \colon {\{0,1\}}^{n\times r}\longrightarrow \mathbb{K}, 
$$
where $\MapComp$ takes a latent cover assignment as input and creates the corresponding Mapper graph. 

\subsection{Cover assignment scheme and Soft Mapper}
\label{sec:CovAssign}

Now, we define stochastic schemes for generating latent cover assignments, that we call {\em cover assignment schemes}.

 
\begin{definition}
\label{def:cas}
A \emph{cover assignment scheme} is a double indexed sequence of random variables 
$$A=(A_{i,j})_{\substack{1\leq i\leq n \\ 1\leq j\leq r}}$$ 
such that each
$A_{i,j}$ is a Bernoulli random variable conditionally to $\Xs_n$. Let $p_{i,j}(\Xs_n)$ be the parameter of the Bernoulli distribution of $(A_{i,j}|\Xs_n)$, which is thus a function of the point cloud $\Xs_n$.
\end{definition}


Note that, in Definition~\ref{def:cas}, the Bernoulli variables $A_{i,j}$ are not assumed to be independent nor identically distributed. 
Moreover, $p_{i,j}(\Xs_n)$ can depend only on its associated point $x_i$, or on the whole point cloud $\Xs_n$.

\begin{definition}
Let $A$ be a cover assignment scheme. 
The \emph{Soft Mapper} of $A$ is defined as the associated 
distribution of Mapper complexes, which corresponds to the push forward measure of the distribution of $A$ by the map $\MapComp$. 
\end{definition}


\section{Examples of cover assignment schemes} \label{sec:examples}

We now give example strategies to define cover assignment schemes, beginning with the one that corresponds to the standard Mapper construction defined in Section~\ref{sec:background}.

\subsection{Standard cover assignment scheme}
\label{sec:stdMapper}

Let $f\colon\Xs_n \rightarrow \R$ be a filter function  and let $(\interv_j)_{1\leq j\leq r}$ be a finite cover of the image $f(\Xs_n)$ of $f$.
The standard Mapper graph is then defined as $\MapComp(e^\ast)$, where for every $1\leq i \leq n$ and 
$1\leq j \leq r$:
$$e^\ast_{i,j}=1 \text{ if } f(x_i)\in \interv_j.$$ 
The cover assignment scheme $A^\ast$, in this case, is such that every entry $A^\ast_{i,j}$ follows a Dirac distribution on $1$ if $f (x_i)\in \interv_j$,
and a Dirac distribution on  $0$ otherwise. In other words, the parameters of the Bernoulli distributions satisfy $p_{i,j} (\Xs_n) = p_{i,j} (x_i) = 1 $ if  $f (x_i)\in \interv_j$,
and $0$ otherwise, that is
\[
    \mathbb{P}(A^\ast=e | \Xs_n)= 
\begin{cases}
    1& \text{ if } e=e^\ast ,\\
    0              & \text{ otherwise.}
\end{cases}
\]
In this degenerated situation, the random variables $A^\ast_{i,j}$ are all independent conditionally to $\Xs_n$, and $A^\ast_{i,j}$ conditionally to $\Xs_n$ is equal to $A^\ast_{i,j}$ conditionally to $x_i$.


\begin{remark}
    An alternative and relevant approach for the standard Mapper graph is to define the intervals $I_j$ via the quantiles of the distribution of $f(\Xs_n)$.  
In this case, the random variables $A^\ast_{i,j}$ do not only depend on $x_i$, but also on the whole point cloud $\Xs_n$.
\end{remark}

\subsection{Smooth relaxation of the standard cover assignment scheme}
\label{sec:Smooth}

Given some $\delta>0$, we can now define a cover assignment scheme $A_\delta$ that approximates the cover assignment scheme $A^\ast$ arising from the standard Mapper graph,
but that also enjoys useful smoothness properties in the optimization setting that we will consider in the next section. 
Specifically, using the same notations as before, and denoting each element of the cover with $I_j=[a_j,b_j]$, consider, for each $j\in\{1,...,r\}$, the function $q_j\colon X  \longrightarrow [0,1]$ defined with:
\begin{align*}
&x \mapsto 
\begin{cases}
    1,& \text{if } f(x)\in [a_j,b_j]\\
    \exp(1-1/(1-(\frac{a_i-f(x)}{\delta})^2)),& \text{if } f(x)\in [a_j-\delta,a_j]\\
    \exp(1-1/(1-(\frac{f(x)-b_i}{\delta})^2)),& \text{if } f(x)\in [b_j,b_j+\delta]\\
    0,              & \text{otherwise}
\end{cases}
\end{align*}
\noindent
Now, define $A_\delta=(A_{\delta,i,j})_{\substack{1\leq i\leq n \\ 1\leq j\leq r}}$ to be the random variable in $\{0,1\}^{n\times r}$ such that for every $(i,j)\in\{1,...,n\}\times\{1,...,r\}$:
$$ A_{\delta,i,j}\mid \Xs_n \sim \mathcal{B}(q_j(x_i)),$$
with the $ A_{\delta,i,j}$'s being jointly independent conditionally to $\Xs_n$.
As for the standard cover, the Bernoulli parameter $p_{i,j} = q_j(x_i)$ depends on its associated point $x_i$ and also on the chosen filter $f$. 

Moreover, notice that for every $x_i\in\Xs_n$ and $j\in\{1,...,r\}$:

$$
    q_j(x_i)\xrightarrow[\delta \to 0]{}\begin{cases}
    1,& \text{if } f(x_i)\in I_j\\
    0,              & \text{otherwise,}
\end{cases}
$$
and this shows that $A_\delta\xrightarrow[\delta \to 0]{\mathcal{L}}A^\ast.$ Note that even though we can approximate the standard Mapper graph in this way, we do not always want to do so. For example, there could be cases where $\delta$ needs to be large enough so as to account for some uncertainty on the bounds of the cover $(\interv_j)_{1\leq j\leq r}$.

\begin{remark}
Note that the same relaxed construction can be made for a multi-dimensional Mapper, i.e., for filter functions taking values in $\R^d$ \cite{carriere2022statistical}, by making slight adjustments to the definition of $q_j$ 
using the Euclidean norm.
\end{remark}

An additional example of a possible cover assignment scheme, which does not imply the existence of a filter function, is given in Appendix \ref{apdx:gaussian}.

\section{Topological risk of Soft Mappers}\label{sec:tpological_loss}

We now switch to the problem of designing filter functions automatically for Mapper graphs using Soft Mapper.
To answer this ill-posed problem, we propose to look for filter functions that are optimal with respect to some topological criteria associated to their (Soft)Mapper graphs.
In particular, we focus on topological losses based on {\em persistent homology}.

\subsection{Topological signature for Mapper graphs}
\label{sec:MapPers}

\paragraph*{Persistent homology.}
Persistent homology is a powerful tool that allows to encode the topological information contained in a nested family of simplicial complexes, also called a \emph{filtered simplicial complex}, see for instance \cite{edelsbrunner2022computational} for a general introduction. It traces the evolution of the homology groups of the nested complexes across different scales, producing topological descriptors that are, in particular, useful in machine learning pipelines \cite{chazal2021introduction}.
In the context of Mapper graphs, a variation of persistent homology called {\em extended} persistent homology has been proved useful, as applying it on Mapper graphs produces descriptors called {\em extended persistence diagrams}. These diagrams only require to define a {\em filtration function} on the graph, and are made of points in the Euclidean plane, each point encoding the presence and size (w.r.t. the filtration function) of a particular topological structure of the Mapper graph (such as a connected component, a branch or a loop).
See Section 2 of \cite{carriere2020perslay} for a brief introduction to extended persistence and \cite{cohen2009extending} for a detailed presentation. 

We now define a filtration function on Mapper graphs in order to compute extended persistence diagrams. Let  $\mathcal{F}(\Xs_n,\mathbb{R})$ be the space of real valued functions defined on the point cloud $\Xs_n$. 
For a function $F \in \mathcal{F}(\Xs_n,\R)$, we first associate a filtration $\phi$ to some $K \in {\rm im}(\text{MapComp})$ with: 
$$\forall \sigma\in K\,:\,\phi(\sigma)=\max_{c\in\sigma}\frac{\sum_{x\in c} F(x) }{\text{card}(c)},$$
that is, node filtration values are defined as the average filter values of the data points associated to the node, and edge filtration values are computed as the maximum of their node values.  
Then, we compute the extended persistence diagram (which we consider as a subset of 
$\mathbb{R}^2$
) of the filtered simplicial complex $(K,\phi)$. We denote by $\MapPers$ the function that takes a Mapper graph and a scalar function on $\Xs_n$, and then outputs the persistence diagram:
$$
\MapPers \colon \mathbb{K}\times\mathcal{F}(\Xs_n,\R)\longrightarrow \mathcal{P}(\R^2).
$$


\paragraph*{Persistence specific loss.}
Now, we introduce a generic notation for a loss function---or, more simply, a statistic---that associates a real value to any extended persistence diagram. Denoting $\PD$ as the set of subsets of $\R^2$ consisting of a finite number of points outside the diagonal $\Delta=\{(x,x):x\in\mathbb{R}\}$, such a function can be written as $\loss \colon \PD \longrightarrow \mathbb{R}.$

\subsection{Statistical risk of the topological signature associated to Soft Mapper}
 
We finally compute the loss 
associated to 
a Mapper graph with the function

\begin{align*}
\mathcal{L} \colon& {\{0,1\}}^{n\times r}\times\mathcal{F}(\Xs_n,\R)  \longrightarrow \R\\
&(e,F) \longmapsto \loss \left(\MapPers \left( \MapComp(e),F \right) \right).
\end{align*}

\noindent Then, we define the risk of a Soft Mapper $\MapComp(A)$ by integrating the loss according to the distribution of the Soft Mapper, or equivalently according to the distribution of the cover assignment scheme: 
$$\mathbb{E} \left( \mathcal{L}(A,F) | \Xs_n\right)=\sum_{e\in {\{0,1\}}^{n\times r}}\mathcal{L}(e,F)\cdot\mathbb{P}(A=e |\Xs_n).$$
\noindent

Here, 
both the distribution of $A$ and the risk are conditional to $\Xs_n$. 
Note that the risk could also be integrated with respect to the distribution  of $\Xs_n$. However, in this article, we only consider the non-integrated version of the risk. 

 \section{Conditional risk optimization with respect to parameters}
\label{sec:optim}

Now that we have properly defined risks associated to Soft Mapper distributions, we study in this section the convergence properties of 
filter optimization schemes minimizing such risks. 
 
\subsection{Problem setting}

Let us introduce a parameterized family of functions $\{ f_\theta :   \Xs_n  \rightarrow \R , \, \theta  \in \R^s\}$. 
In order to simplify notations, we assume in the following of the article that the function $F$ used to compute persistence diagrams and the filter function $f_\theta$ used to design cover assignments are the same, $F=f_\theta$. 
Let $A$ be a cover assignment scheme whose joint distribution $\Prob_\theta$ depends on the filter function $f_\theta$; 
that is the Bernoulli parameters $p_{i,j}$ may depend on the filter function values and the parameters $\theta$. 
Note that this dependency is not only true for marginals of the distribution of the cover assignment scheme,
but also eventually for its dependency structure.

Our goal is to find the optimal set of parameters $\bar \theta$ that minimizes the topological risk associated to  $\MapComp(A$), when $f_\theta$ is used to define the filtration values on the Mapper graphs. 
In other words, if we denote:
\begin{align}
\text{L} \colon \ParSpace  &\longrightarrow \mathbb{R} \notag\\
\theta &\longmapsto \mathbb{E_\theta}(\mathcal{L}(A,f_\theta) | \Xs_n) \label{eq:L},
\end{align}
our aim is to find a minimizer of $\text{L}$. Note that in the definition of $\text{L}$, the expectation depends on $\theta$ because the distribution of $A$ also depends on it.

In order to prove guarantees about minimizing $\text{L}$, we follow~\cite{carriere2021optimizing}, which uses the theoretical background introduced in~\cite{davis2020stochastic},
in which the authors prove that stochastic gradient descent algorithms converge under certain conditions. 
To use this framework, it suffices to prove two points (see Corollary 5.9. in \cite{davis2020stochastic} and Appendix \ref{apdx:o-minimal}):
 
\begin{itemize}
    \item L is definable in an o-minimal structure, 
    \item L is locally Lipschitz.
\end{itemize}

\begin{remark}
When the cover assignment scheme is defined as the standard cover assignment scheme corresponding to the standard Mapper graph (see Section~\ref{sec:stdMapper}), 
this problem amounts to finding an optimal $f_\theta$ that can be used to compute Mapper graphs. We will see however that convergence of the optimization problem in this case is without guarantees, 
which constitutes the main motivation for defining our smooth relaxation Soft Mapper (see Section~\ref{sec:Smooth}).
\end{remark}

\subsection{Theoretical guarantees on the convergence of a gradient descent scheme}

Under regularity assumptions on the parameterized family of filter functions $\mathcal F=\{f_\theta\colon\Xs_n \longrightarrow\mathbb{R},\,\theta\in\ParSpace\}$, we now show that the risk $\text{L}$ in Equation~\eqref{eq:L} is definable and smooth.

\begin{theorem}
\label{thm1}
    Suppose that there exists an o-minimal structure $\mathcal{S}$ such that: 
    \begin{itemize}
        \item for every $x\in\Xs_n$, the function $\theta\mapsto f_\theta(x)$ is definable in $\mathcal{S}$ and is locally Lipschitz,
        \item for every $m\in\mathbb{N}$, the restriction of $\loss$ to the set of (extended) persistence diagrams of size $m$ 
is definable in $\mathcal{S}$ and is locally Lipschitz,
        \item for every $e\in{\{0,1\}}^{n\times r}$, the function $\theta\mapsto \mathbb{P}_\theta(A =e| \Xs_n)$ is definable in $\mathcal{S}$ and is locally Lipschitz.
    \end{itemize}
    Then $\textup{L}$ is definable in $\mathcal{S}$ and is locally Lipschitz.
\end{theorem}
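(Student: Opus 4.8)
The plan is to exploit the finite-sum structure of $\text{L}$ and reduce everything to closure properties of definable, locally Lipschitz functions together with the persistence-map regularity result of \cite{carriere2021optimizing}. Writing out the conditional expectation,
$$\text{L}(\theta) = \sum_{e\in\{0,1\}^{n\times r}} \mathcal{L}(e,f_\theta)\cdot\mathbb{P}_\theta(A=e\mid\Xs_n),$$
we see $\text{L}$ is a finite sum (at most $2^{nr}$ terms) of products. Since definability in a fixed o-minimal structure and local Lipschitzness are both preserved under finite sums and products (the product of two locally Lipschitz functions is locally Lipschitz because such functions are locally bounded), it suffices to show that, for each fixed $e$, the two factors $\theta\mapsto\mathbb{P}_\theta(A=e\mid\Xs_n)$ and $\theta\mapsto\mathcal{L}(e,f_\theta)$ are definable and locally Lipschitz. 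The first is exactly the content of the third hypothesis, so the whole argument concentrates on $\mathcal{L}(e,f_\theta)=\loss(\MapPers(\MapComp(e),f_\theta))$.

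The crucial structural observation is that, for a fixed $e$, the graph $K:=\MapComp(e)$ does \emph{not} depend on $\theta$; only its filtration values do. First I would show that $\theta\mapsto(\phi(\sigma))_{\sigma\in K}$ is definable and locally Lipschitz as a map into $\R^{|K|}$. Indeed, each vertex value $\phi(c)=\mathrm{card}(c)^{-1}\sum_{x\in c}f_\theta(x)$ is a fixed positive linear combination of the coordinate functions $\theta\mapsto f_\theta(x)$, which are definable and locally Lipschitz by the first hypothesis; each edge value is the maximum of its two endpoint vertex values, and the maximum of finitely many definable locally Lipschitz functions retains both properties. Hence the full vector of filtration values is a definable, locally Lipschitz function of $\theta$.

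Next, I would invoke the persistence-map result of \cite{carriere2021optimizing} for the fixed complex $K$: the assignment sending a filtration-value vector to its extended persistence diagram, post-composed with $\loss$, is definable and locally Lipschitz. Concretely, one partitions $\ParSpace$ into finitely many definable cells on which the total preorder of the values $\phi(\sigma)$ is constant; on each such cell the persistence pairing is a fixed selection of pairs of simplices, so every diagram point is literally a pair of coordinates of the filtration-value vector, the off-diagonal size $m$ is constant, and $\loss$ restricted to size-$m$ diagrams (definable and locally Lipschitz by the second hypothesis) composes with this coordinate selection to give a definable, locally Lipschitz function on that cell. Definability of $\theta\mapsto\mathcal{L}(e,f_\theta)$ then follows immediately, since its graph is a finite union of definable sets; combining with the first factor and summing over $e$ yields that $\text{L}$ is definable.

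The main obstacle is the \emph{local Lipschitz} property across the strata where the ordering of filtration values flips — in particular where a persistence pair becomes trivial and a diagram point crosses the diagonal, so that the off-diagonal size $m$ jumps. On each cell the function reduces cleanly to a coordinate selection composed with $\loss_m$, but gluing these pieces into a globally locally Lipschitz function requires continuity of $\theta\mapsto\loss(\MapPers(K,f_\theta))$ across such strata, which rests on the permutation-invariance and stability of the persistence map together with the consistency of $\loss$ as a single function on $\PD$ across sizes. This is precisely the delicate content established in \cite{carriere2021optimizing}, which I would cite rather than re-derive, checking only that our specific filtration $\phi$ (vertex averages, edge maxima) satisfies its definability and local-Lipschitz hypotheses, as verified in the previous paragraph.
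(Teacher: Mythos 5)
Your overall architecture is exactly the paper's: the same finite sum-of-products decomposition of $\text{L}$, the same reduction to a fixed cover assignment $e$ (so that $K=\MapComp(e)$ is a fixed complex and only its filtration values move with $\theta$), the same verification that vertex averages and edge maxima are definable and locally Lipschitz in $\theta$, and the same appeal to \cite{carriere2021optimizing} for the persistence step. Up to that point the proposal is correct and matches the paper's Lemma~\ref{lem1} (which factors $\mathcal{L}_e=\loss\circ\Persistence\circ\SubFilt\circ\VertexFilt$) essentially step for step.

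The genuine soft spot is your last paragraph. The cross-stratum ``gluing'' problem you describe---a persistence pair hitting the diagonal, the off-diagonal size $m$ jumping, and continuity then resting on ``the consistency of $\loss$ as a single function on $\PD$ across sizes''---cannot be resolved the way you propose, because no hypothesis of the theorem relates the restrictions of $\loss$ to diagrams of different sizes, and \cite{carriere2021optimizing} does not establish any such cross-size consistency either. Indeed, if the diagram were taken to be the set of off-diagonal points only, the conclusion would genuinely fail: take $\loss(D)=\mathrm{card}(D)$, which satisfies the second assumption (each restriction is a constant function), yet jumps discontinuously in $\theta$ whenever a pair crosses the diagonal. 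The paper avoids this issue entirely by never letting the size change: the persistence map borrowed from \cite{carriere2021optimizing} is a map $\Persistence\colon\R^{|K|}\to\R^{|K|}$ for the fixed complex $K$, i.e.\ the diagram is represented as a fixed-length vector in which zero-persistence (diagonal) pairs are retained, and $\loss$ is then replaced once and for all by its restriction to diagrams of size $|K|$. With that convention, $\mathcal{L}_e$ is a plain composition of definable, locally Lipschitz maps, and no gluing across sizes is ever needed. So your argument becomes correct---and coincides with the paper's---once you adopt the fixed-size representation of the diagram; as written, its final step invokes a property that is neither assumed nor true in general.
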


\begin{remark}
    Our proof of Theorem \ref{thm1} is given in Appendix \ref{apdx:proof} in the case where regular persistent homology is used, but it can be extended in a straightforward way to extended persistence diagrams, as extended persistent homology on a simplicial complex $K$ can be equivalently seen as regular persistent homology on the cone on $K$~(see chapter VII.3 in \cite{edelsbrunner2022computational}). Moreover, defining the filtration on the coned complex also extends naturally by using affine transformations.
\end{remark}

Under the assumptions of Theorem \ref{thm1}, it is then possible to give guarantees on the convergence of a stochastic gradient descent scheme to some critical points of $\text{L}$. This only requires additional, but mild and not very restrictive technical conditions regarding the stochastic gradient descent algorithm itself (see Appendix \ref{apdx:sgd}). 


\subsection{Discussing the assumptions of Theorem \ref{thm1}}
In this section, we discuss the assumptions of Theorem \ref{thm1}, and provide usual cases in which they are satisfied. 

\paragraph*{Assumption 1.} The first assumption concerns the smoothness of the parameterized family of functions $\{f_\theta\colon\Xs_n \longrightarrow\mathbb{R},\,\theta\in\ParSpace\}$ and its regularity with respect to the set of parameters $\theta$. As mentioned before, following the result of \cite{wilkie1996model}, semi-algebraic functions (for example polynomial, rational, minimum and maximum functions), the exponential function and functions defined as compositions and usual operations between them are all definable in a same o-minimal structure. Furthermore, choosing continuously differentiable functions is sufficient to also have the local Lipschitz property.\\
    As such, the family of linear functions $\{f_\theta\colon x\mapsto\langle x,\theta \rangle,\,\theta\in\mathbb{R}^s\}$ satisfies the assumption, as well as the family of parameterized fully-connected neural networks since they are defined by composition between matrix products (which are polynomial) and activation functions involving exponential, maximum and hyperbolic functions.

\paragraph*{Assumption 2.} The second assumption concerns the persistence-based loss $\loss$, that is used to compute the topological risk. In \cite{carriere2021optimizing}, the authors list a number of possible functions for $\loss$ that satisfy our second assumption. For example, $\loss$ can be the \emph{total persistence}, which quantifies the information given by a persistence diagram, defined as:
    $$\{(u_i,v_i)\}_{1\leq i\leq n}\longmapsto \sum_{i=1}^{n}\vert u_i-v_i\vert.$$
    It can also be computed from persistence landscapes~\cite{Bubenik2015} or from the bottleneck distance to a target persistence diagram~\cite{carriere2021optimizing}.

\paragraph*{Assumption 3.} Finally, the third assumption concerns the cover assignment scheme $A$. More specifically, it requires the regularity and smoothness of the success probabilities that give the distribution of $A$.\\ 
    Interestingly, this assumption does {\em not} hold for the standard cover assignment scheme. For example, consider the elementary example where $\Xs_n\subset\R$ and $A$ is the standard cover assignment scheme, which is degenerate at $e_\theta$, and which corresponds to the linear filter function $f_\theta\colon x\mapsto  \langle x,\theta\rangle$ and a cover $(\interv_j)$ of its image. Fix a non-zero positive point $x\in\Xs_n$ (a similar argument can be made if it is negative) and a left hand bound $a_j$ of one of the intervals. Denoting $\theta_0=\frac{a_j}{x}$, we have that $\theta\mapsto \mathbb{P}_\theta(A=e_{\theta_0} | \Xs_n)$ is discontinuous at $\theta_0$, since $\forall \epsilon>0\,:\,\langle x,\theta_0-\epsilon\rangle=x\cdot(\theta_0-\epsilon)<a_j,$ and therefore, $\mathbb{P}_{\theta_0-\epsilon}(A=e_{\theta_0} | \Xs_n)=0.$

    This constitutes the main motivation for introducing our smooth cover assignment scheme because the functions $\theta\mapsto \mathbb{P}_\theta (A=e | \Xs_n)$ are in this case products of the functions $q_j$, which are smooth with respect to the parameters (if  our first assumption holds).

\subsection{Computing the conditional risk in practice}

\begin{algorithm}[H]
\caption{Soft Mapper Optimization Algorithm}\label{alg1}
\begin{algorithmic}
\Require Initial parameter set $\theta_0$, Number of Monte Carlo random samples $\nbMC$, Learning rate sequence $(\alpha_i)_i$, Random noise sequence $(\xi_i)_i$, Number of epochs $N$.
\For{$0\leq i \leq N-1$}
\For{$1\leq m \leq \nbMC$}
    \State $e \gets $ sample from $\mathbb{P}_{\theta_i}$ 
    \State $y_{i,m} \gets$ an element of the sub-differential in $\theta_i$ of $\mathcal{L}_e\colon\theta\mapsto \mathcal{L}(e,f_\theta)$
\EndFor
\State $y_i \gets \frac{1}{\nbMC}\sum_{m=1}^{\nbMC}y_{i,m}$
\State $\theta_{i+1}\gets \theta_i -\alpha_i(y_i+\xi_i)$
\EndFor
\State {\bf return} $\theta_N$
\end{algorithmic}
\end{algorithm}

Computing the conditional risk L$(\theta)$, for a fixed $\theta\in\ParSpace$, can be costly in practice since it requires computing the loss $\mathcal{L}(e,f_\theta)$ for every possible cover assignment $e\in\{0,1\}^{n\times r}$. As such, we estimate L$(\theta)$ with Monte Carlo methods. Note that this is possible here because the distribution $\mathbb{P}_\theta$  of the cover assignment scheme is indeed explicitly defined and known at each step of the gradient descent. If $\nbMC$ is a non-zero integer and $(e^{(m)})_{1\leq m\leq \nbMC}$ is a sequence of independent realizations of the cover assignment scheme $A$, then the Monte Carlo approximation of the conditional risk is:
$$\Tilde{\text{L}}(\theta)=\frac{1}{\nbMC}\sum_{m=1}^{\nbMC}\mathcal{L}(e^{(m)},f_\theta).$$
The law of large numbers gives: 
$$\Tilde{\text{L}}(\theta)\xrightarrow[\nbMC \to \infty]{a.s}\text{L}(\theta).$$
\noindent Moreover, the coordinates of $A$ follow a Bernoulli conditional distribution, making repeated random sampling straightforward, at least when the marginal distributions of $\mathbb{P}_\theta$ are assumed to be independent. 

For a fixed point cloud $\Xs_n$, a chosen family of parameterized conditional probabilities $\theta \mapsto \mathbb{P}_\theta(\cdot |\Xs_n ) $ and a family of parameterized filters $\theta \mapsto f_\theta$, our corresponding optimization algorithm is detailed in Algorithm \ref{alg1}.

\section{Numerical Experiments}\label{sec:experiments}


In this section, we illustrate the efficiency of optimizing filter functions with Soft Mapper.
In particular, we show that Mapper graphs computed from an optimized filter function (computed with
gradient descent on Soft Mapper) are generally much better structured than Mapper graphs obtained
from arbitrary filters (as is usually done in the Mapper applications). We present applications on 
3D shape data in Section~\ref{subsec:expe-3d} and on single-cell RNA sequencing data in Section~\ref{subsec:single-cell}.
Our code is available in the following \texttt{Github} repository \cite{github}.

\subsection{Mapper optimization on 3D shapes}
\label{subsec:expe-3d}

A first application where we can use the Soft Mapper optimization setting is finding linear filters in order to skeletonize 3-dimensional shapes with Mapper graphs.
Here, our dataset $\Xs_n$ consists each time of a point cloud embedded in $\R^3$. The different point clouds we study are displayed (as meshes) in Figure \ref{fig:3d_shape}.
The parametric family of functions is linear, i.e., equal to $\{f_\theta\colon x\mapsto\langle x,\theta \rangle,\,\theta\in\mathbb{R}^3\}$, and the cover assignment scheme $A_\delta$ is the smooth relaxation of the standard case, with $\delta=10^{-2}\cdot(\max_{x\in\Xs_n}f_\theta(x)-\min_{x\in\Xs_n}f_\theta(x))$. The cover of the image space is given by $r$ intervals of the same length, such that consecutive intervals have a percentage $g$ of their length in common. The clustering algorithm for the three shapes is KMeans. The values of $r$ (also called resolution), $g$ (also called gain) and the number of clusters in the KMeans algorithm, for each 3-dimensional shape, are summarized in Table \ref{tab:r_g} of Appendix~\ref{apdx:experiments}.

\paragraph*{Objective.} Intuitively, the optimal directions to filter the 3-dimensional shapes (in a topological sense) are:
\begin{itemize}
    \item for the human: the vertical direction,
    \item for the octopus: the parallel direction to its tentacles,
    \item for the table: the perpendicular direction to its upper surface,
\end{itemize}
as these directions induce Mapper graphs with more topological structures.
We will therefore measure the quality of our results by comparing our optimized directions $\Bar{\theta}$ to the ones cited above. To find $\Bar{\theta}$, we use the total (regular) persistence as a persistence specific loss $\loss$ and we run Algorithm \ref{alg1} with $N=200$ and $M=10$, each time taking the diagonal as the initial direction, i.e. $\theta_0=(\frac{1}{\sqrt{3}},\frac{1}{\sqrt{3}},\frac{1}{\sqrt{3}})^T$.
The learning curves for each 3-dimensional shape are displayed in Figure \ref{fig:3d_curves}, and the correlations between the optimized directions and those we identified as intuitively optimal are summarized in Table \ref{tab:3d_correlations}. As one can see from the table, we are able to recover these intuitive directions with gradient descent.

\paragraph*{Qualitative assessment.} One can see, in Figures \ref{fig:3d_initial} and \ref{fig:3d_final}, that the regular Mapper graphs built with the initial and final (optimized) filter functions show clear improvement in the ability of the graphs to act as skeletons of the original point clouds. As such, we see that optimizing the Soft Mapper corresponding to the smooth relaxation of the standard cover assignment scheme succeeds in identifying optimal filter functions. The third shape, representing a table, is particularly interesting. Indeed, the optimal direction that we captured is different from the first and the second principal components computed by PCA, since the principal plane of the point cloud is given by the surface of the table. Hence, there is a contrast between the topological criteria that we use, which is the total persistence, and the maximum variance criteria used by PCA.

\begin{figure}[H]

\centering
\includegraphics[width=.3\textwidth]{./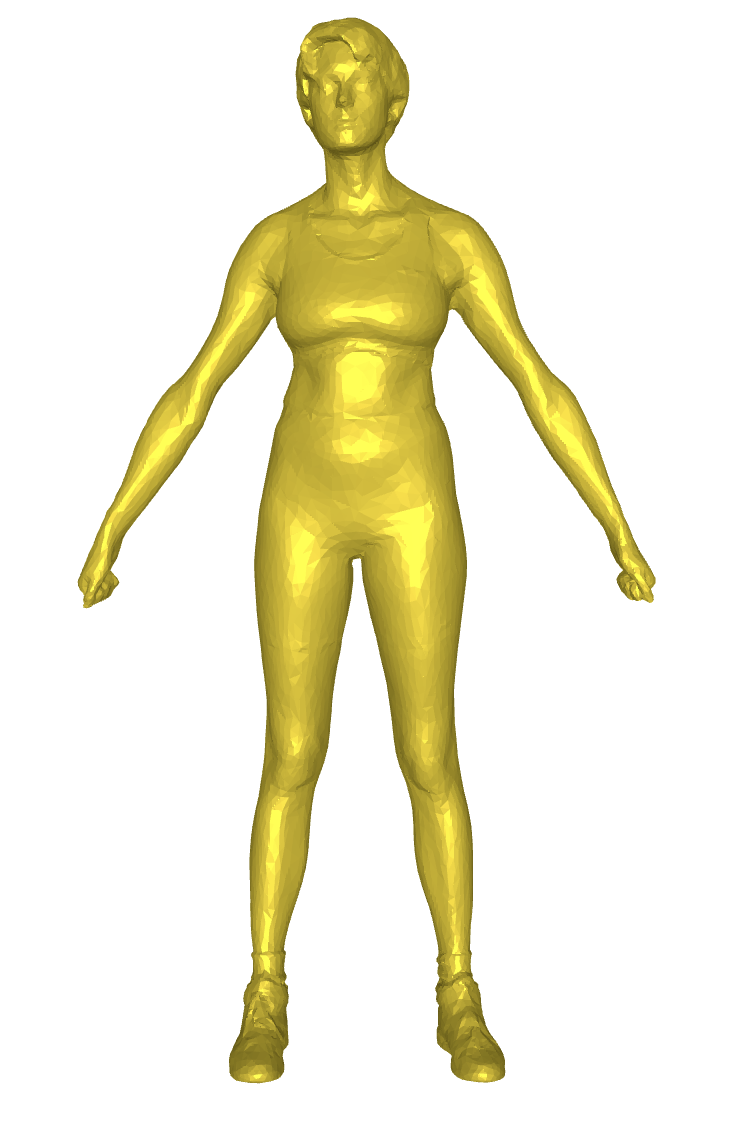}\hfill
\includegraphics[width=.3\textwidth]{./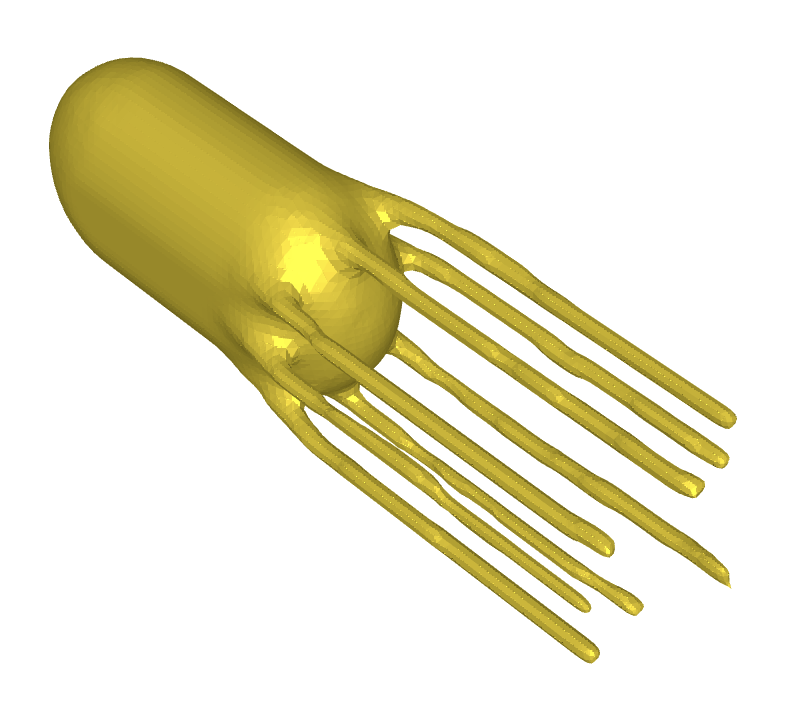}\hfill
\includegraphics[width=.3\textwidth]{./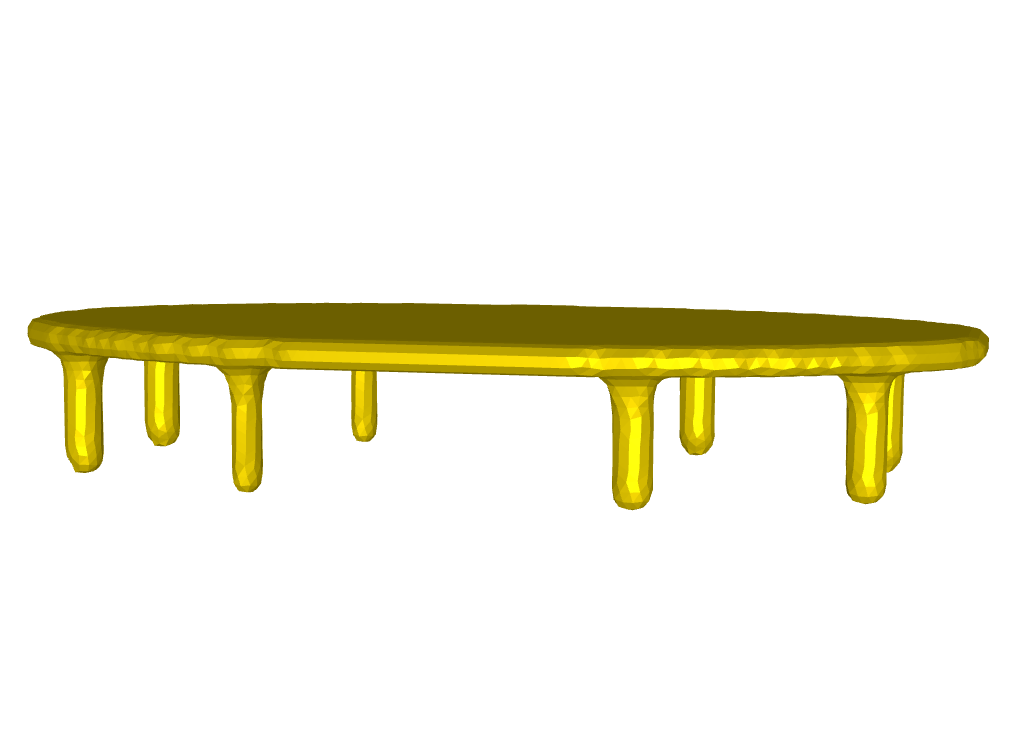}

\caption{Meshes of 3-dimensional point clouds representing from left to right: a human, an octopus and a table.}
\label{fig:3d_shape}
\end{figure}

\begin{figure}[H]

\centering
\includegraphics[width=.3\textwidth]{./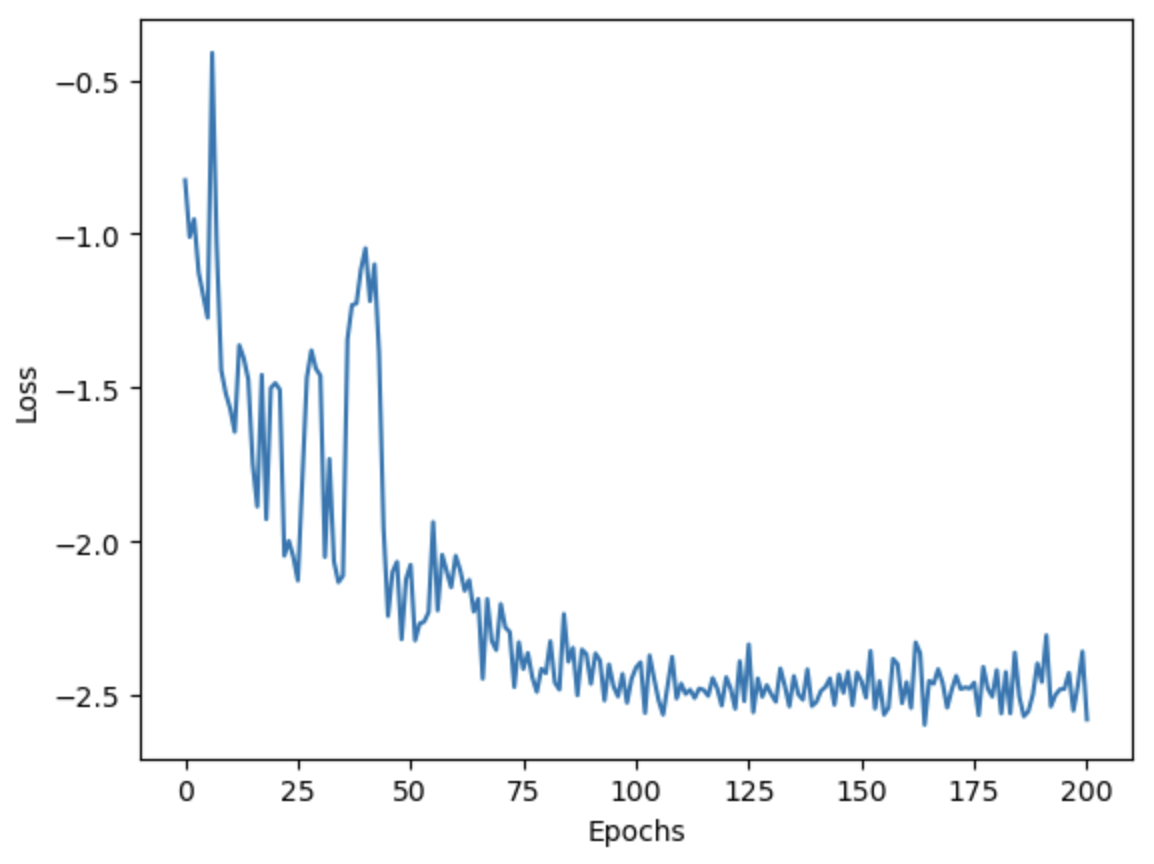}\hfill
\includegraphics[width=.3\textwidth]{./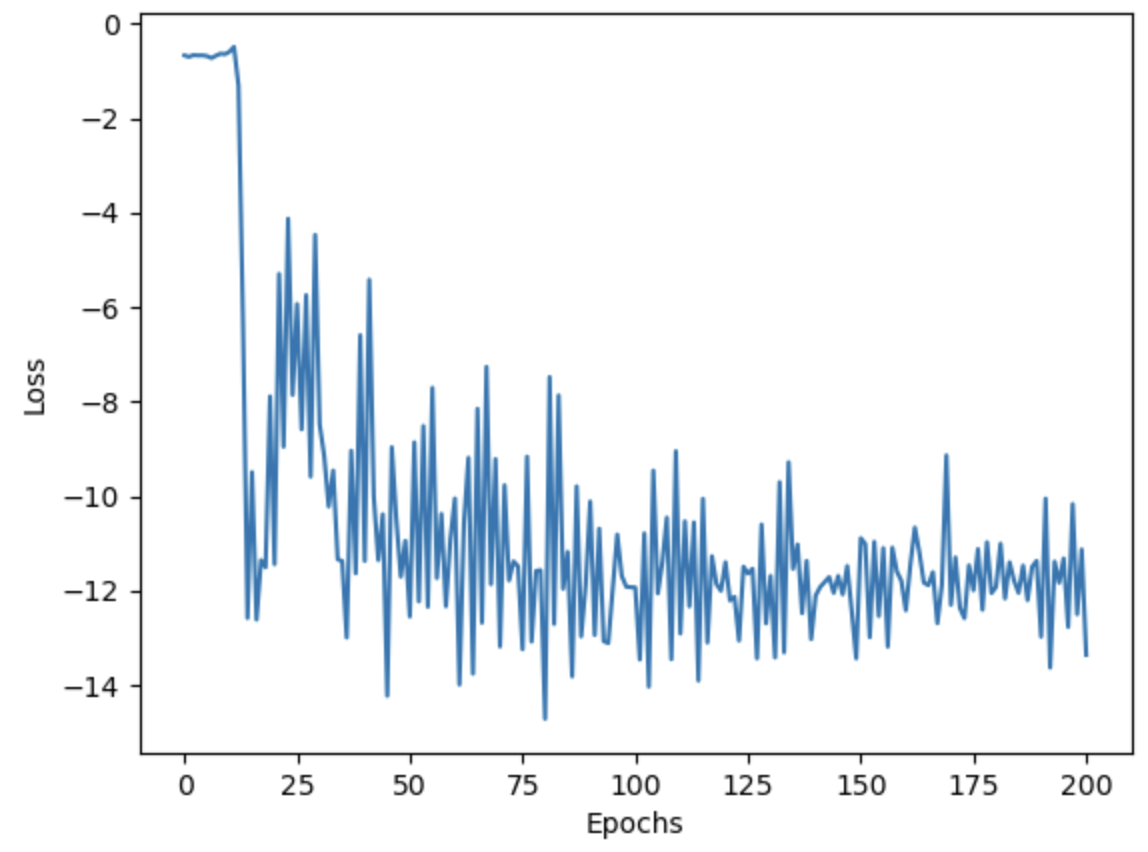}\hfill
\includegraphics[width=.3\textwidth]{./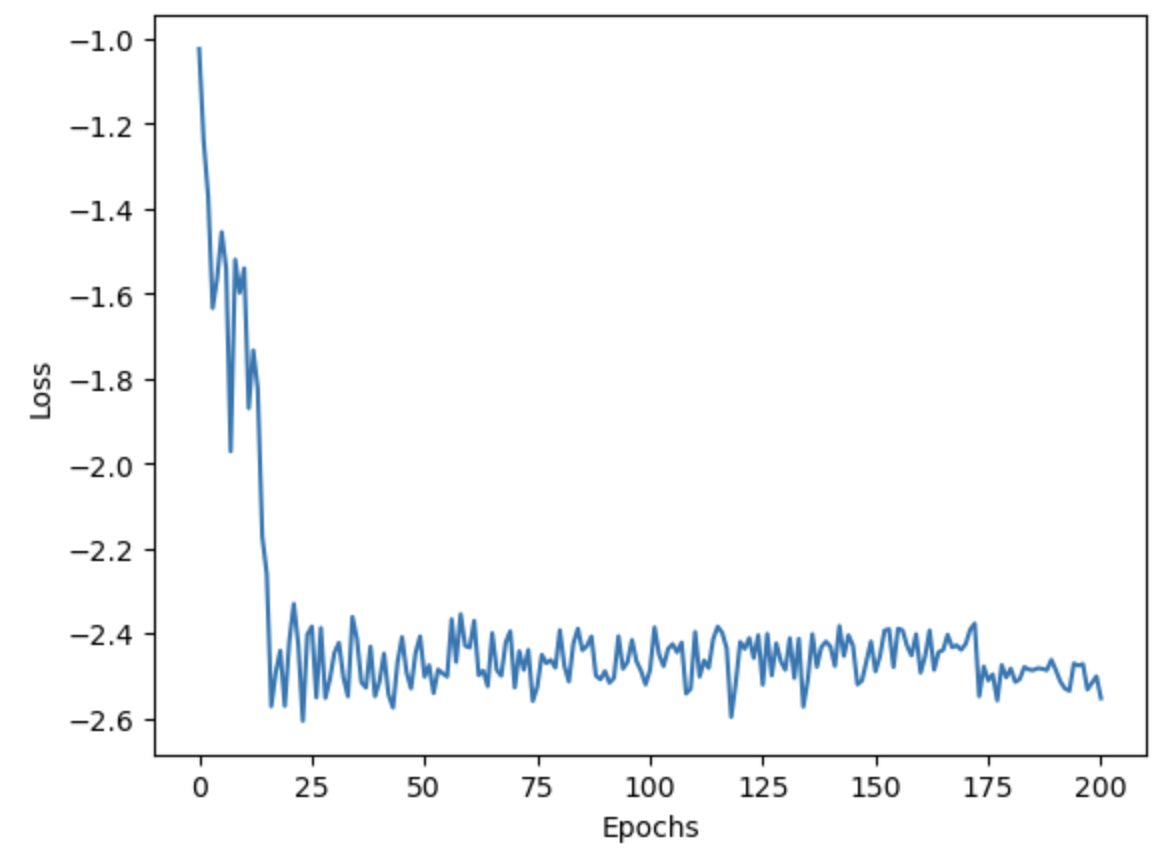}

\caption{Learning curves for the 3-dimensional shapes corresponding, from left to right, to: the human, the octopus and the table.}
\label{fig:3d_curves}

\end{figure}

\begin{table}[H]
    \centering
    \begin{tabular}{|c|c|c|}
    \hline
     Human    &  Octopus & Table\\
     \hline
     0.9999    & -0.9984 & 0.9993\\
     \hline
    \end{tabular}
    \caption{Correlation between the optimized directions and the optimal ones, for each 3-dimensional shape.}
    \label{tab:3d_correlations}
\end{table}

\begin{figure}[H]

\centering
\includegraphics[width=.3\textwidth]{./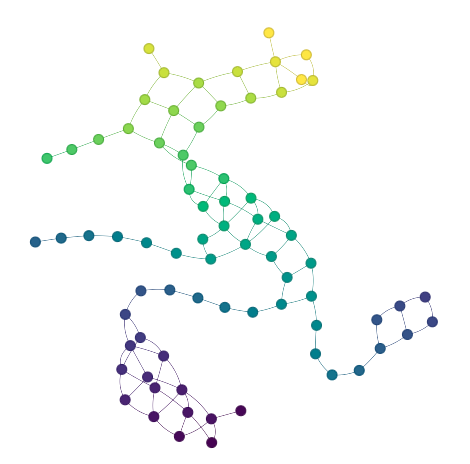}\hfill
\includegraphics[width=.3\textwidth]{./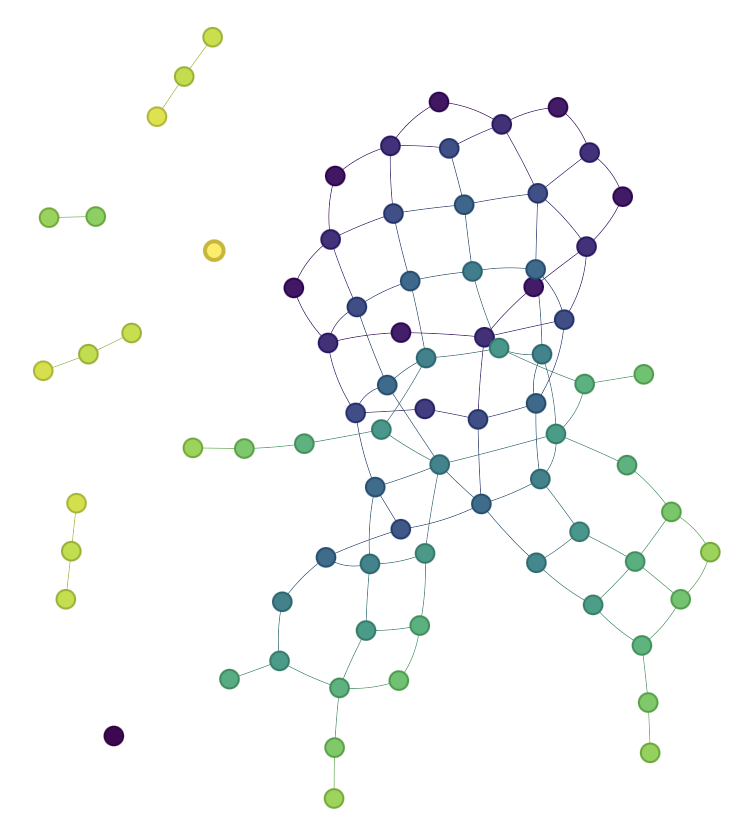}\hfill
\includegraphics[width=.3\textwidth]{./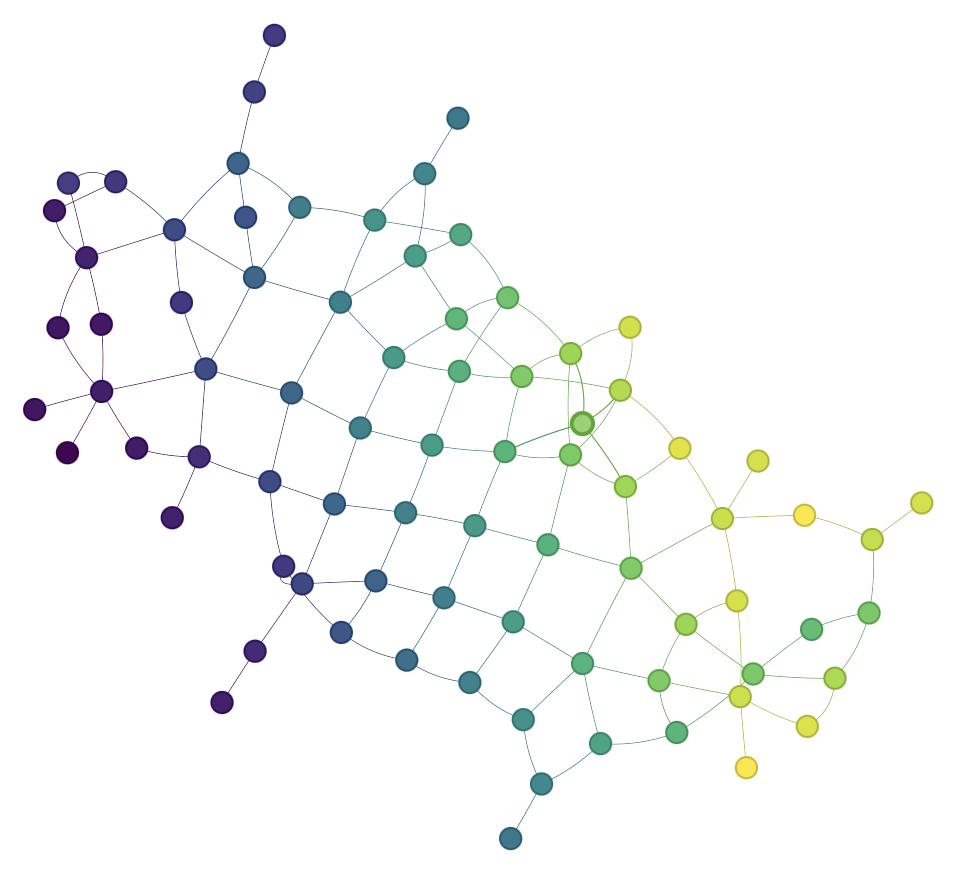}

\caption{Regular Mapper graphs computed with the initial filter function, corresponding, from left to right, to: the human, the octopus and the table. Vertices are colored using the mean value of the filter function in the corresponding clusters.}
\label{fig:3d_initial}

\end{figure}

\begin{figure}[H]

\centering
\includegraphics[width=.3\textwidth]{./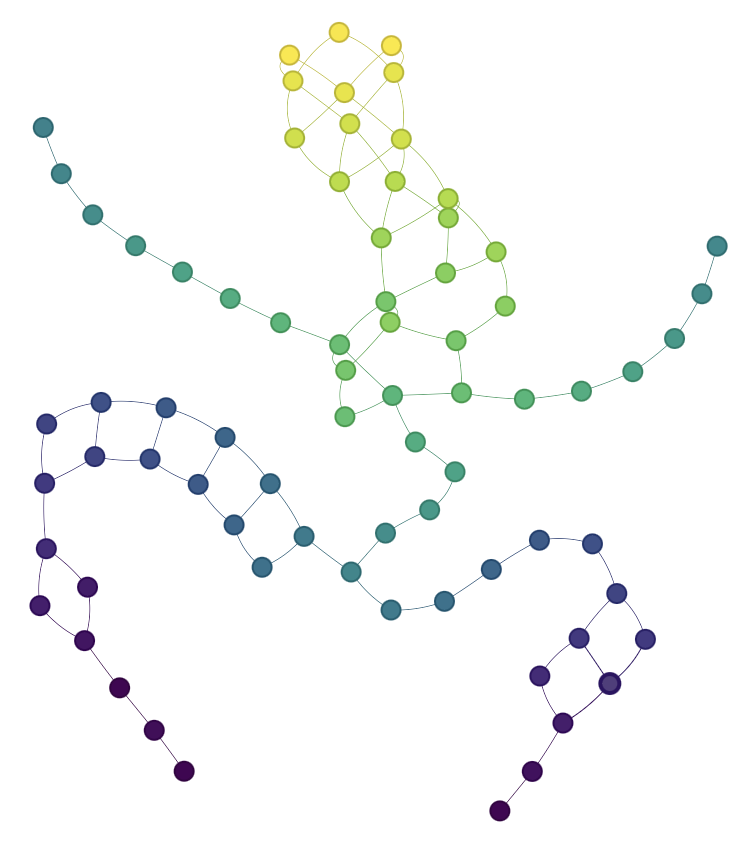}\hfill
\includegraphics[width=.3\textwidth]{./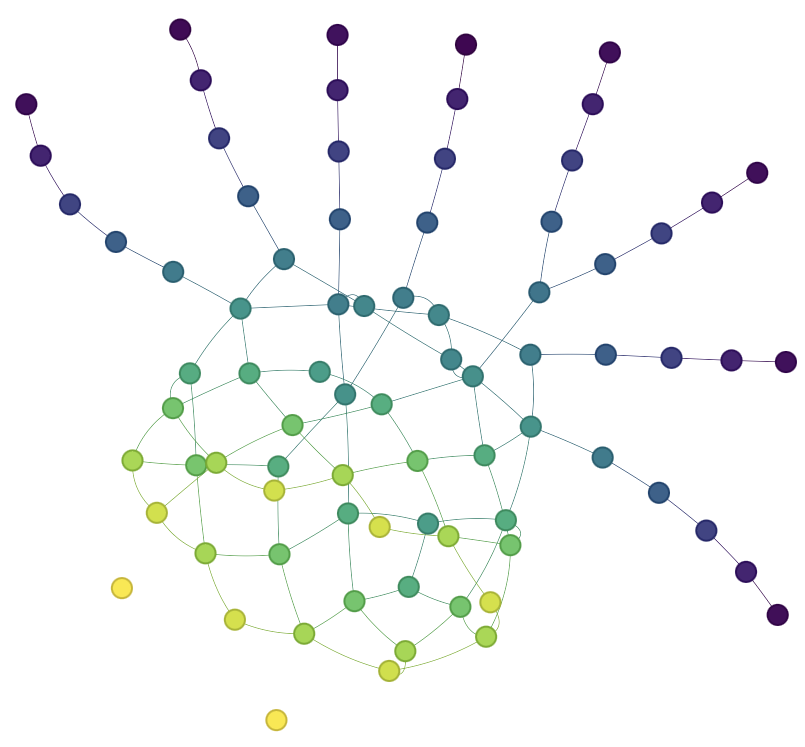}\hfill
\includegraphics[width=.3\textwidth]{./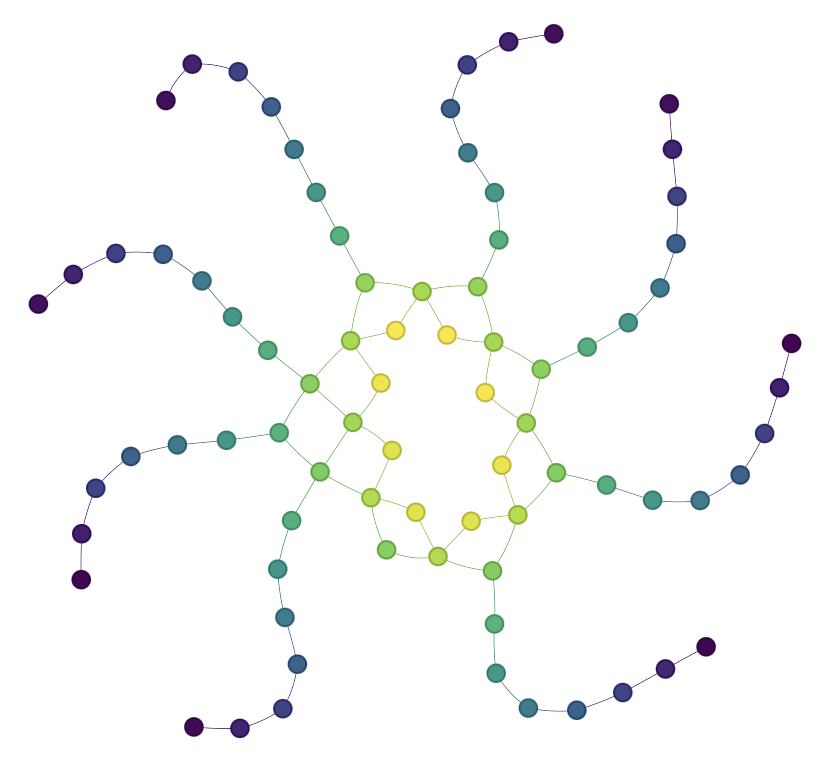}

\caption{Regular Mapper graphs computed with the optimized filter function, corresponding, from left to right, to: the human, the octopus and the table. Vertices are colored using the mean value of the filter function in the corresponding clusters.}
\label{fig:3d_final}

\end{figure}

\subsection{Mapper optimization on RNA-sequencing data}
\label{subsec:single-cell}

We now apply Mapper optimization on the human preimplantation dataset of \cite{petropoulos2016single}, which can also be found in the tutorial of the \texttt{scTDA} Python library. The dataset consists of $n=1,529$ cells form $88$ human preimplantation embryos, sampled at $5$ different timepoints. The dataset can be accessed in the following link \cite{sctda}, and it contains the expression levels for $p=26,270$ genes for each individual cell. The information of the sampling timepoint for each cell is also given, but we do not include it during optimization. The dataset is first preprocessed using the \texttt{Seurat} package in \texttt{R} (gene counts for each cell are divided by the total counts for that cell and multiplied by $10^4$, and then they are natural-log transformed using $\log(1+\cdot)$), which produces a normalized dataset $\Xs_n\subseteq\mathbb{R}^p$.
The parametric family of filter functions we wish to optimize is also linear here, i.e. equal to $\{f_\theta\colon x\mapsto\langle x,\theta \rangle,\,\theta\in\mathbb{R}^p\}$, and the cover assignment scheme $A_\delta$ is the smooth relaxation of the standard case with $\delta=10^{-5}\cdot(\max_{x\in\Xs_n}f_\theta(x)-\min_{x\in\Xs_n}f_\theta(x))$. The cover of the image space is given by $25$ intervals of the same length, such that consecutive intervals have a percentage of $30\%$ of their length in common. The clustering algorithm used is agglomerative clustering and its threshold is fixed using a Hausdorff distance heuristic: we first compute the Hausdorff distance between $\Xs_n$ and a randomly sampled subset of $\Xs_n$ of size $ n/3\approx500$, then we manually tune the threshold using factors of this distance until we get Mapper graphs of reasonable size.

\paragraph*{Objective.} To find $\bar\theta$, we use the total extended persistence as a persistence specific loss $\loss$ and we run Algorithm \ref{alg1} with $N=200$ and $M=10$, taking the diagonal as the initial direction, i.e. $\theta_0=(\frac{1}{\sqrt{p}},...,\frac{1}{\sqrt{p}})^T$. The learning curve in displayed in Figure \ref{fig:sctda_curve} of Appendix~\ref{apdx:experiments}.
The regular Mapper graphs computed using the initial and the final filter functions are displayed in Figure \ref{fig:sctda_mappers}, and are colored with respect to the time component (which was not included in the training dataset).

\begin{figure}[H]
    \centering
    \includegraphics[width=.3\textwidth]{./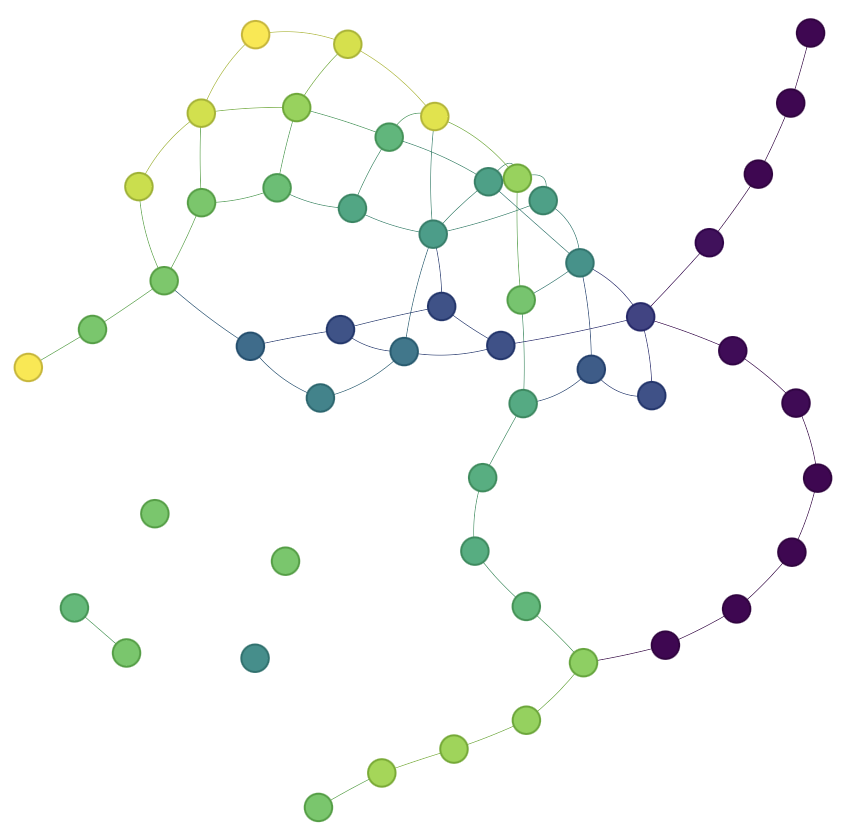}\hspace{0.05\textwidth}
    \includegraphics[width=.3\textwidth]{./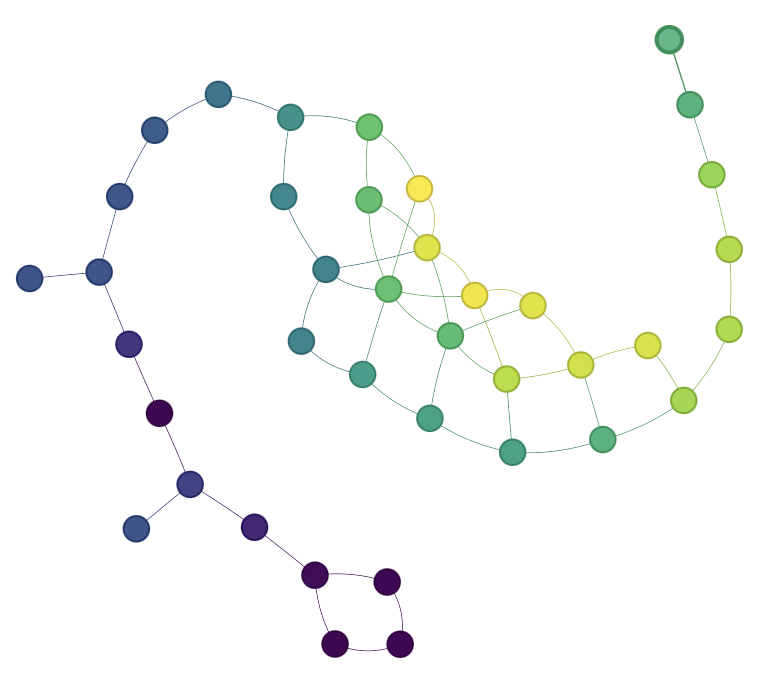}
    \caption{Regular Mapper graphs for the human preimplantation dataset computed using: in the left the initial filter function and in the right the optimized filter function. Vertices are colored using the mean value of the sampling timepoint in the clusters.}
    \label{fig:sctda_mappers}
\end{figure}

\paragraph*{Qualitative assessment.} One can see that the data representation in the Mapper graph produced by the optimized filter function fits the time structure better than with the initial function. In order to confirm this, we isolate each subset of cells having the same sampling timepoint and we plot their respective estimated densities with respect to the initial and the optimized filter function values, in Figure \ref{fig:sctda_densities}. One can see that the optimized filter that we captured is capable of sorting the cells with respect to time, especially at the early timepoints. The reduced performance in this aspect for the later timepoints is, in our guess, due to slowing down of the cell differentiation process. Furthermore, the comparison, in Table \ref{tab:sctda_pearsonr} of Appendix~\ref{apdx:experiments}, between Pearson's correlation coefficients also show that the optimized filter is more correlated to time. \\
We also verify that the branches in our optimized Mapper graph correspond to the same two genes, HTR3E for the early timepoints and CDX1 for the later ones, that were identified by \cite{rizvi2017single}, see Figure \ref{fig:sctda_genes}. We also identified a few nodes in the branch containing the cells which were sampled in the early stages, that do not contain a high expression level for the HTR3E gene. This could potentially point out another subpopulation of cells with distinct genomic profiles, and that our optimized Mapper graph has captured. An additional experiment using single cell RNA-sequencing data is given in Appendix \ref{apdx:mef}.
 
\begin{figure}[H]
    \centering
    \includegraphics[width=.3\textwidth]{./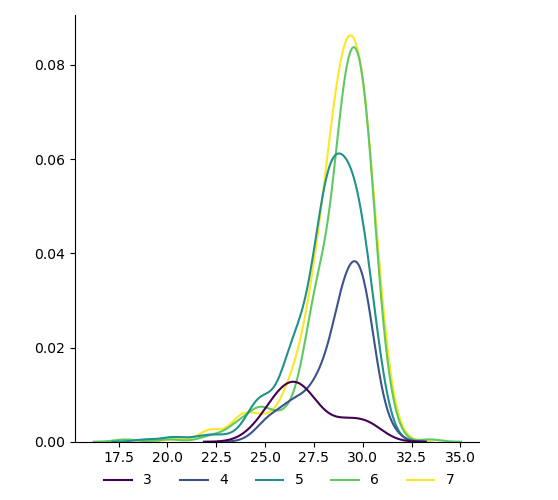}
    \includegraphics[width=.3\textwidth]{./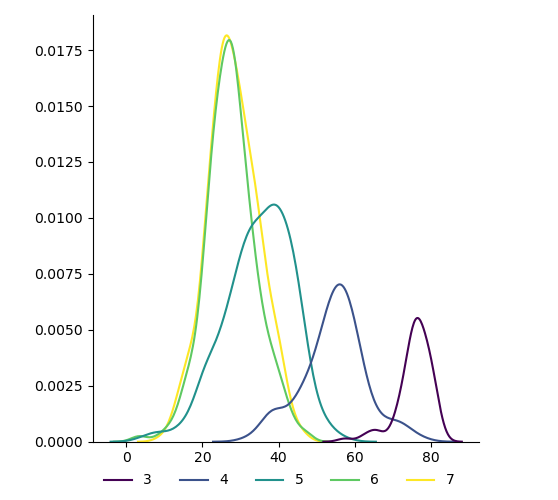}
    \caption{Estimated density of each subset of cells having the same sampling timepoint, with respect to: in the left the initial filter function values and in the right the optimized filter function values. Colors indicate the sampling timepoint in days.}
    \label{fig:sctda_densities}
\end{figure}
 
\begin{figure}[H]
    \centering
    \includegraphics[width=.3\textwidth]{./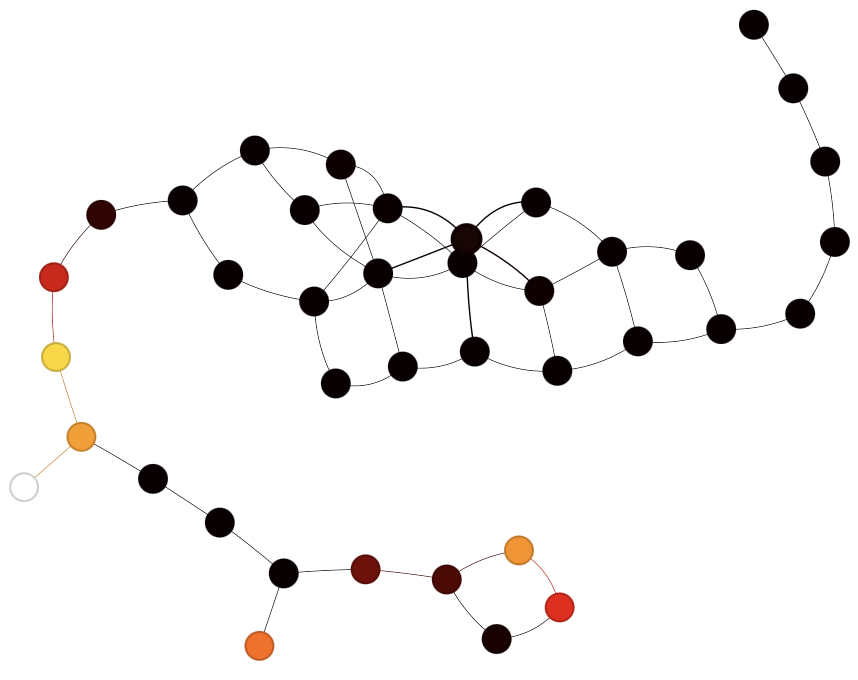}\hspace{0.05\textwidth}
    \includegraphics[width=.3\textwidth]{./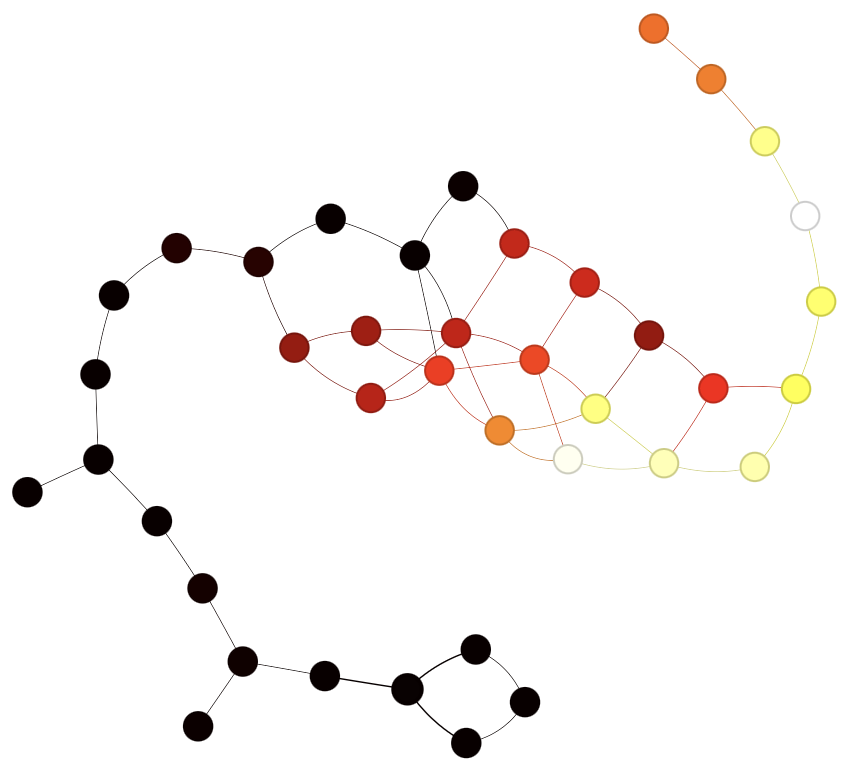}
    \caption{Regular Mapper graph computed using the optimized filter function, colored using the mean normalized expression of: in the left gene HTR3E and in the right gene CDX1.}
    \label{fig:sctda_genes}
\end{figure}

\section{Discussion and future work} \label{sec:discussion}
In this article, we have introduced Soft Mapper, a distributional and smoother version of the standard Mapper graph, with provable convergence guarantees using persistence-based losses and risks. Our case study in this article was finding an optimal filter function, among a parameterized family of functions, in order to construct regular Mapper graphs incorporating an optimized and maximal amount of topological information. 
We then produced examples of such optimization processes on real 3D shape and single-cell RNA sequencing data, for which we were able to obtain structured Mapper representations in an unsupervised way.
These representations, especially for the single cell RNA-sequencing data, are not meant to represent novel or state of the art data representations in their respective research domains, but as a proof of concept of the practical benefit of our method.
Moreover, our construction is not limited to the choice of a linear family of filter functions or to the filter optimization setting as a whole. Possible future work includes optimizing non-linear filter functions, based on neural networks or kernel methods, and studying Soft Mappers based on different cover assignment schemes, like the Gaussian cover assignment scheme defined in Appendix \ref{apdx:gaussian}. 

\section{Acknowledgments}

The research was supported by three grants from Agence Nationale de la Recherche: GeoDSIC ANR-22-CE40-0007, ANR JCJC TopModel ANR-23-CE23-0014 and AI4scMed, France 2030 ANR-22-PESN-000.

\newpage
\bibliography{references}
\bibliographystyle{unsrt}

\newpage
\appendix

\section{Gaussian cover assignment scheme}\label{apdx:gaussian}
In this section, it is assumed that  $\Xs_n$ is a point cloud in $\R^p$. Additionally, we consider $r$ centers $\{c_1,...,c_r\}\subseteq\R^p$ and $r$ symmetric, semi-definite and positive matrices $\{\Sigma_1,...,\Sigma_r\}\subseteq\mathbb{R}^{p\times p}$. For each $j\in\{1,...,r\}$, consider the function:
\begin{align*}
q_j\colon \mathbb{R}^p  &\longrightarrow [0,1]\\
x &\longmapsto \exp \left(-(x-c_j)^T  \Sigma_j^{-1} (x-c_j)\right).
\end{align*}
\noindent
Define $A=(A_{i,j})_{\substack{1\leq i\leq n \\ 1\leq j\leq r}}$ to be a random variable in $\{0,1\}^{n\times r}$ such that for every $(i,j)\in\{1,...,n\}\times\{1,...,r\}$~:
$$ A_{i,j}\mid \Xs_n\sim \mathcal{B}(q_j(x_i)),$$
and  as before we take the $ A_{i,j}$'s to be jointly conditionally independent.

This cover assignment scheme is similar to Gaussian mixture models, in that its realizations can be seen as a "one-hot encoding" of the latent variables in a mixture model. However, we can see that a realization of $A$ can have more than one non-zero entry per line as opposed to a mixture model. Furthermore, mean and variance parameters can be inferred with an EM algorithm, and estimated proportions can be also involved in the definition  of the $q_j$'s. 

Note that this strategy of defining a cover assignment scheme does not use a filter function or an overlapping cover entirely.

\section{Elements of o-minimal geometry}\label{apdx:o-minimal}

\begin{definition} An o-minimal structure on the field of real numbers $\mathbb{R}$ is a collection $(S_n)_{n\in\mathbb{N}}$ where each $S_n$ is a set of subsets of $\mathbb{R}^n$ that satisfies:
\begin{enumerate}
    \item  All algebraic subsets of $\mathbb{R}^n$ are in $S_n$;
    \item  $S_n$ is a Boolean subalgebra of the powerset of $\mathbb{R}^n$ (i.e. stable by finite union, finite intersection and complementary);
    \item  if $A\in S_n$ and $B\in S_m$, then $A\times B\in S_{n+m}$;
    \item  if $\pi\colon\mathbb{R}^{n+1}\to\mathbb{R}^n$ is the linear projection onto the first $n$ coordinates and $A\in S_{n+1}$ then $\pi(A)\in S_n$;
    \item  $S_1$ is exactly the family of finite unions of points and intervals.
\end{enumerate}
\end{definition}

The elementary example of an o-minimal structure is the collection of semi-algebraic sets. An element $A\in S_n$ for some $n\in\mathbb{N}$ is called a definable set. Furthermore, a map $f \colon  A\to \mathbb{R}^m$ is called a definable map if its graph (i.e. $\{(x,f(x))\,:\,x\in A\}$) is in $S_{n+m}$. 

Definable maps are stable under addition, product and composition. A function that is coordinate-wise definable is also definable. Moreover, the result of \cite{wilkie1996model} shows that there exists an o-minimal structure that contains the graph of the exponential function. 

An important property of definable maps is that they admit a finite Whitney stratification. This means that if $f \colon  A\to \mathbb{R}^m$ is definable with $A\in S_n$, then $A$ can be decomposed into a finite union of smooth manifolds such that the restriction of $f$ to each of these manifolds is a smooth function. 

For more details on o-minimal geometry, see \cite{coste2000introduction}.

\section{Proof of Theorem \ref{thm1}}\label{apdx:proof}

\begin{lemma}\label{lem1}
Let $\mathcal{S}$ be an o-minimal structure on $\R$. Assume that the two following conditions are satisfied. 
    \begin{itemize}
        \item For every $x\in\Xs_n$, the function $\theta \in \ParSpace \mapsto f_\theta(x)$ is definable in $\mathcal{S}$ and is locally Lipschitz.
        \item For every $m\in\mathbb{N}$, the restriction of the persistence specific loss $\loss$ to the set of persistent diagrams of size $m$ 
is definable in $\mathcal{S}$ and is locally Lipschitz. 
    \end{itemize}
Then for every $e\in{\{0,1\}}^{n\times r}$, the function 
$$\mathcal{L}_e\colon\theta \in \ParSpace \mapsto \mathcal{L}(e,f_\theta)$$
is definable in $\mathcal{S}$ and is locally Lipschitz.
\end{lemma}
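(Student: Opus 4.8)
The plan is to decompose $\mathcal{L}_e$ as a composition of three maps and to show that each is definable in $\mathcal{S}$ and locally Lipschitz, then conclude by invoking that both properties are preserved under composition (and under finite sums, averages and maxima). The crucial simplification is that $e$ is \emph{fixed}: the graph $K := \MapComp(e)$ does not depend on $\theta$, so all the $\theta$-dependence enters only through the filter $f_\theta$, which is used solely to define the filtration values on the fixed complex $K$ and to evaluate $\loss$ on the resulting diagram.

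First I would treat the filtration map. Recalling that each node $c$ of the fixed complex $K$ is a fixed finite subset of $\Xs_n$, the node filtration value $\phi_\theta(c) = \frac{1}{\mathrm{card}(c)}\sum_{x\in c} f_\theta(x)$ is a fixed-coefficient average of the functions $\theta\mapsto f_\theta(x)$, which are definable and locally Lipschitz by the first hypothesis; edge values are maxima of two such node values. Since definability and local Lipschitzness are stable under finite sums, scalar multiples and maxima, the vector-valued map $\Phi\colon\theta\mapsto(\phi_\theta(\sigma))_{\sigma\in K}\in\R^{\mathrm{card}(K)}$ is definable and locally Lipschitz. This step is the part genuinely specific to (Soft) Mapper, but it is elementary.

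Next I would handle the persistence computation followed by the loss, i.e. the map sending a vector of filtration values on $K$ to $\loss$ of the resulting diagram. Here I would invoke the machinery of \cite{carriere2021optimizing}: the domain $\R^{\mathrm{card}(K)}$ is partitioned into finitely many definable regions, cut out by the linear inequalities comparing the filtration values of the simplices, on each of which the ordering of the simplices, and hence the persistence pairing, is combinatorially constant. On each such region $\MapPers(K,\cdot)$ is a coordinate selection, since each birth and death coordinate equals the filtration value of a fixed simplex, hence it is definable and Lipschitz, and the stability of persistence guarantees that these pieces glue into a globally continuous, locally Lipschitz, definable map. Refining this partition by the number $m$ of off-diagonal points lets me compose with $\loss$ restricted to diagrams of size $m$, which is definable and locally Lipschitz by the second hypothesis; definability then glues over the finite partition, and local Lipschitzness follows by combining the local Lipschitzness of the persistence map with that of $\loss$ on the relevant size class.

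Finally, writing $\mathcal{L}_e = \bigl(\loss\circ\MapPers(K,\cdot)\bigr)\circ\Phi$ and using that composition preserves both definability and local Lipschitzness concludes the proof. The main obstacle is the middle step: the simplex ordering, the persistence pairing, and the off-diagonal size of the diagram all change across thresholds in the filtration-value space, so $\loss$ is only accessed through its size-$m$ restrictions, and one must verify that definability and, especially, local Lipschitzness survive the gluing across stratum boundaries. This is precisely what the framework of \cite{carriere2021optimizing} provides, so the difficulty lies in invoking that result correctly rather than in re-deriving it.
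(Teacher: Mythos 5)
Your proposal is correct and takes essentially the same approach as the paper: the paper decomposes $\mathcal{L}_e = \loss \circ \Persistence \circ \SubFilt \circ \VertexFilt$ (your single map $\Phi$ merges $\VertexFilt$ and $\SubFilt$), treats the filtration step exactly as you do via the first hypothesis, and cites \cite{carriere2021optimizing} as a black box for the definability and Lipschitzness of the persistence step whose internal stratification argument you spell out. The only minor difference is that the paper simply replaces $\loss$ by its restriction to diagrams of size $\vert K\vert$ since $K$ is fixed, rather than refining the partition over the off-diagonal size $m$ as you do.
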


\begin{proof}  Let $e\in{\{0,1\}}^{n\times r}$. Let $K=\MapComp(e)$ with vertex set $V$.
Remember that each vertex $c \in V$ is actually a subset of $\Xs_n$. We now define three maps to decompose the function $\mathcal{L}_e$. First, let us introduce the function     
\begin{align*}
\VertexFilt \colon \ParSpace  &\longrightarrow \mathbb{R}^{\mid V\mid}\\
\theta&\longmapsto \left(\frac{\sum_{x\in c} f_\theta(x) }{\text{card}(c)}\right)_{c\in V}.
\end{align*}
For each coordinate of the function $\VertexFilt$, that is for each $c \in V$, the function $\theta \longmapsto [\VertexFilt (\theta)]_c$  
is a linear combination of the functions $\theta\mapsto f_\theta(x)$. We can therefore see that it is definable in $\mathcal{S}$ and locally Lipschitz, by our first assumption. 

Then we introduce 
\begin{align*}
\SubFilt\colon \mathbb{R}^{\mid V\mid}  &\longrightarrow \mathbb{R}^{\mid K\mid}\\
\Phi&\longmapsto (\max_{c\in\sigma}\Phi_c)_{\sigma\in K},
\end{align*}
and finally $\Persistence \colon \mathbb{R}^{\mid K\mid}   \longrightarrow \mathbb{R}^{\mid K\mid} $ that computes persistence for a filtration that acts on a fixed simplicial complex. The two functions $\SubFilt$ and $\Persistence$  are taken from \cite{carriere2021optimizing}, where they are both proven to be definable in every o-minimal structure and Lipschitz.

Notice that:
$$\mathcal{L}_{e}=\loss \circ\Persistence\circ\SubFilt\circ\VertexFilt.$$
Since $e$, and thus $K$, are fixed, $\loss$ can be replaced by its restriction to persistence diagrams of size $\vert K\vert$.
Hence, following our second assumption, $\mathcal{L}_{e}$ is definable in $\mathcal{S}$ and locally Lipschitz.
\end{proof}

Recall the assumptions in Theorem \ref{thm1}~:

    Suppose that there exists an o-minimal structure $\mathcal{S}$ and we have that: 
    \begin{itemize}
        \item for every $x\in\Xs_n$, the function $\theta\mapsto f_\theta(x)$ is definable in $\mathcal{S}$ and is locally Lipschitz.
        \item for every $m\in\mathbb{N}$, the restriction of $\loss$ to the set of persistent diagrams of size $m$ 
is definable in $\mathcal{S}$ and is locally Lipschitz. 
        \item for every $e\in{\{0,1\}}^{n\times r}$, the function $\theta\mapsto \mathbb{P}_\theta(A =e| \Xs_n)$ is definable in $\mathcal{S}$ and is locally Lipschitz.

    \end{itemize}

By Lemma \ref{lem1} and following the first two assumptions, we know that for every $e\in{\{0,1\}}^{n\times r}$, the function $\mathcal{L}_e\colon\theta\mapsto \mathcal{L}(e,f_\theta)$ is definable in $\mathcal{S}$ and is locally Lipschitz. Now, for every $\theta\in\ParSpace$:
$$\text{L}(\theta)=\sum_{e\in {\{0,1\}}^{e\times r}}\mathcal{L}(e,f_\theta)\cdot\mathbb{P}_\theta(A=e | \Xs_n).$$
As such, L is a sum of products between functions that are definable in $\mathcal{S}$ and locally Lipschitz. We conclude that L is itself definable in $\mathcal{S}$ and locally Lipschitz.

Note that the local Lipschitz property is stable by product (as opposed to the global Lipschitz property). This is due to the fact that the product of two Lipschitz and bounded functions is Lipschitz, and the fact that we can always limit the neighborhoods of points in $\ParSpace$ to bounded ones.

\section{Technical conditions for Stochastic Gradient Descent}\label{apdx:sgd}
We are in the setting where we use stochastic gradient descent to minimize a function L.
If we write the iterates of the algorithm as: 
$$x_{k+1}=x_k-\alpha_k(y_k+\xi_k),$$
where 
$$y_k\in\text{Conv}\left\{\lim_{z\to x_k}\nabla\text{L}(z)\,:\,\text{L is differentiable at }z\right\},$$
consider the following three conditions: 
\begin{enumerate}
    \item for any $k$, $\alpha_k\geq 0$, $\sum_{k=1}^\infty \alpha_k=+\infty$ and $\sum_{k=1}^\infty \alpha_k^2<+\infty$;
    \item $\sup_k\Vert x_k\Vert<+\infty$, almost surely;
    \item Conditionally on the past, $\xi_k$ must have zero mean and have a second moment that is bounded by a function $p\colon\ParSpace\longrightarrow\mathbb{R}$ which is bounded on bounded sets. 
\end{enumerate}
Note that the last condition is satisfied by taking a sequence of independent and centered variables with bounded variance, which are also independent of the past iterates $(x_k)_k$ and  $(y_k)_k$.

According to \cite{davis2020stochastic}, under these three conditions together with the condition that L is definable in an o-minimal structure and locally Lipschitz, then $(\text{L}(x_k))_k$ converges almost surely to a critical values and the limit points of $(x_k)_k$ are critical points of L.

\section{Additional Figures and Tables for the experiments}\label{apdx:experiments}

\begin{table}[H]
    \centering
    \begin{tabular}{|c|c|c|c|}
    \hline
     Parameter & Human    &  Octopus & Table\\
     \hline
     Resolution & 25    & 10 & 10\\
     \hline
     Gain & 0.3    & 0.3 & 0.35\\
     \hline
     Number of clusters & 3    & 8 & 8\\
     \hline
    \end{tabular}
    \caption{Resolution, gain and number of clusters parameters that are used to compute the Mapper for each 3-dimensional shape.}
    \label{tab:r_g}
\end{table}

\begin{figure}[H]
    \centering
    \includegraphics[width=.3\textwidth]{./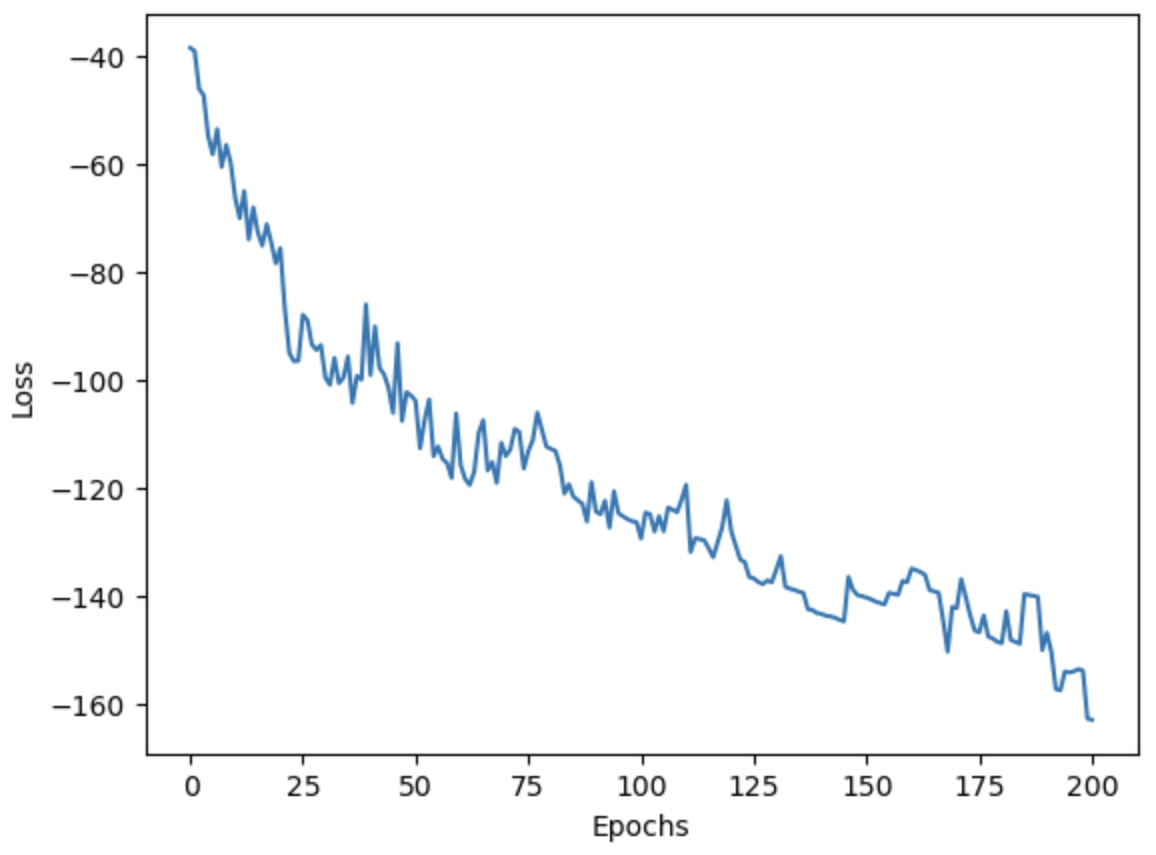}
    \caption{Learning curve for the human preimplantation dataset.}
    \label{fig:sctda_curve}
\end{figure}

\begin{table}[H]
    \centering
    \begin{tabular}{|c|c|c|}
    \hline
    Filter & Correlation with time & P-value\\
    \hline
     Initial    &  0.1330 & 1.7596e-07\\
     \hline
     Optimized    & -0.7549 & 4.0503e-282\\
     \hline
    \end{tabular}
    \caption{Pearson's correlation between the initial filter and time, and the optimized filter and time for the human preimplantation dataset. The associated $p$-values, obtained from testing the null hypothesis that the true correlation coefficient is zero, are also presented.}
    \label{tab:sctda_pearsonr}
\end{table}

\section{Mouse embryonic fibroblasts reprogramming dataset}\label{apdx:mef}

We consider the mouse embryonic fibroblasts (MEF) reprogramming dataset of \cite{schiebinger2019optimal}. It consists of $p=19,089$ gene expressions for $251,203$ MEF cells, densely sampled across $18$ days, with $39$ individual timepoints. The experiment involves adding Doxorubicine (Dox) to the cells on day 0, withdrawing it at day 8, and then transferring them to either a serum-free N2B27 2i medium or maintaining them in serum. 

\paragraph*{Objective.} We would, therefore, want to produce a representation, using our Soft Mapper optimization, that accounts for the time component (like in Section~\ref{subsec:single-cell}) and for the divergence in the treatment that the cells received at day 8.
In order to achieve this, we first take a uniformly sampled subsample of the dataset of size $n=1,500$ and we use the same preprocessing procedure as with the human preimplantation dataset. Similarly, we consider the same settings (linear family of filter functions, smooth cover assignment scheme, agglomerative clustering, diagonal initial parameter set and extended total persistence), and we run Algorithm \ref{alg1} with $N=300$ and $M=10$.
The learning curve is displayed in Figure \ref{fig:ot_curve}. 

\begin{figure}[H]
    \centering
    \includegraphics[width=.3\textwidth]{./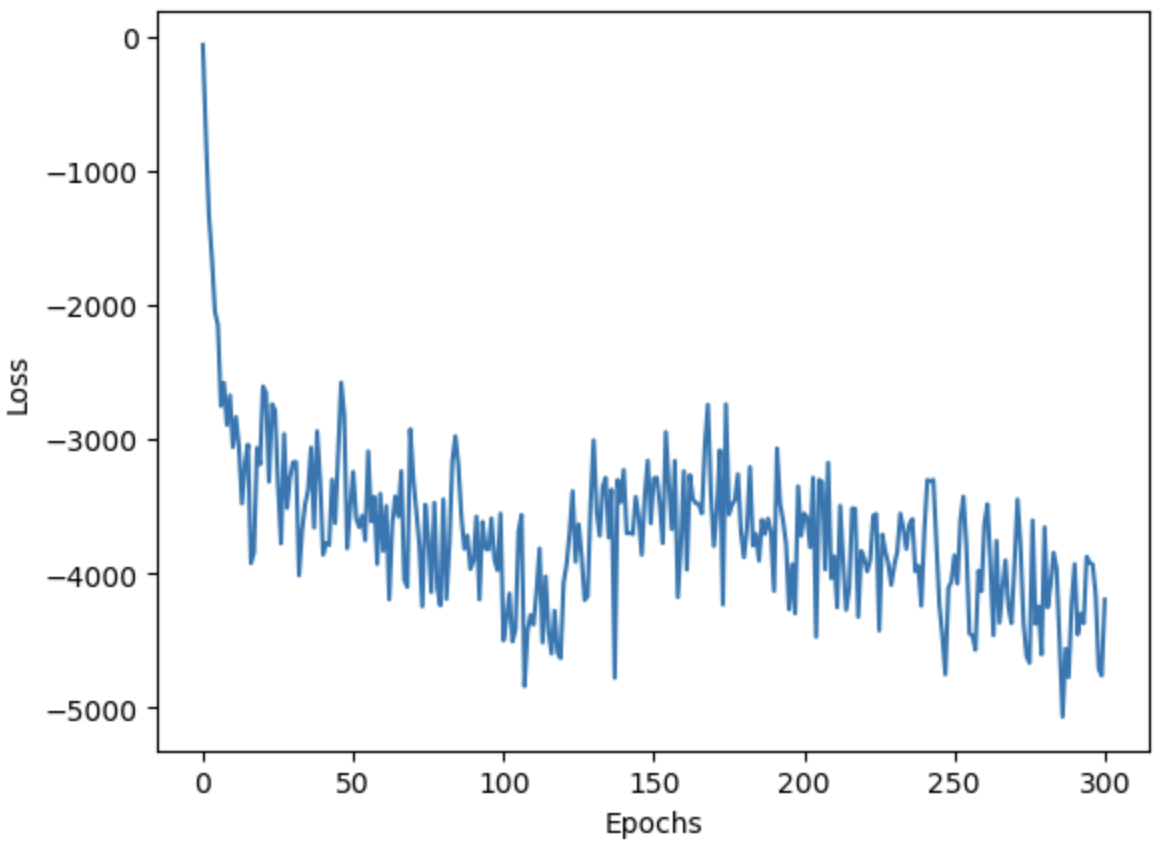}
    \caption{Learning curve for the MEF reprogramming dataset.}
    \label{fig:ot_curve}
\end{figure}

\paragraph*{Qualitative assessment.} By looking at the standard Mapper graphs corresponding to the initial and the optimized filter functions in Figure \ref{fig:ot_mappers}, one can see that the optimized Mapper graph represents the time component better and that it shows two major branches, which point to the two phases that appear in day 8 of the experiment. These observations are confirmed by the improvement in the Pearson's correlation coefficients with respect to time between the initial and the optimized filter function values in Table \ref{tab:ot_pearsonr}. We also color the optimized Mapper graph in Figure \ref{fig:ot_phases} using the three phases in the experiment (Dox, 2i and Serum), each mapped to a different color channel.

\begin{figure}[H]
    \centering
    \includegraphics[width=.3\textwidth]{./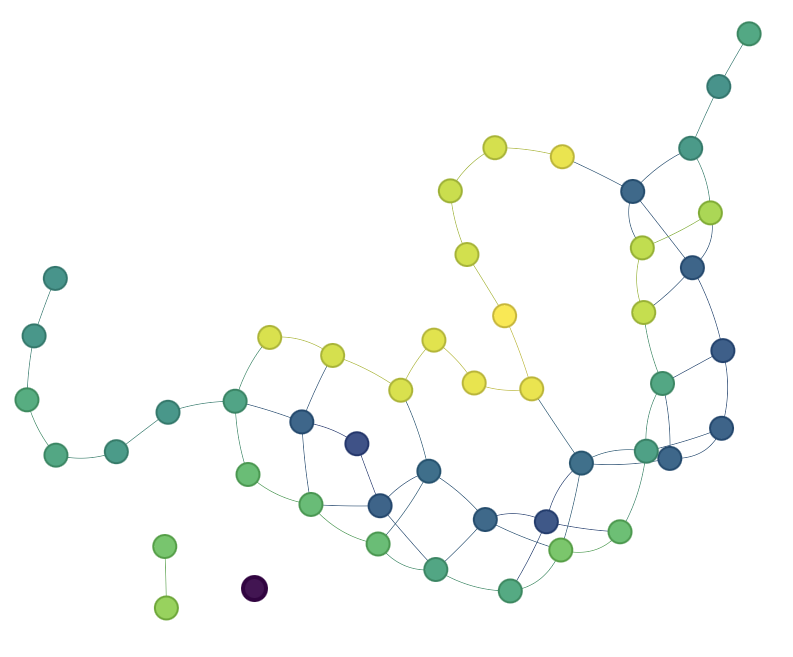}\hfill
    \includegraphics[width=.3\textwidth]{./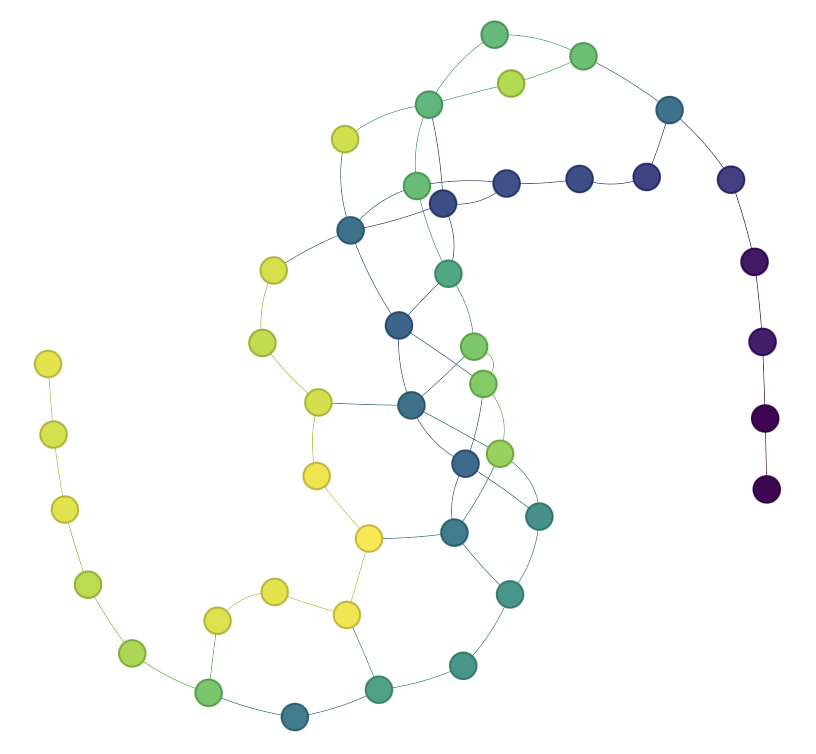}
    \caption{Classical Mapper graphs for the MEF reprogramming dataset computed using: in the left the initial filter function and in the right the optimized filter function. Vertices are colored using the mean value of the sampling timepoint in the corresponding clusters.}
    \label{fig:ot_mappers}
\end{figure}
\begin{table}[H]
    \centering
    \begin{tabular}{|c|c|c|}
    \hline
    Filter & Correlation with time & P-value\\
    \hline
     Initial    &  -0.0560 & 2.9882e-02\\
     \hline
     Optimized    & -0.4015 & 3.2090e-59\\
     \hline
    \end{tabular}
    \caption{Pearson's correlation between the initial filter and time, and the optimized filter and time. The associated $p$-values, obtained from testing the null hypothesis that the true correlation coefficient is zero, are also presented.}
    \label{tab:ot_pearsonr}
\end{table}
\begin{figure}[H]
    \centering
    \includegraphics[width=.3\textwidth]{./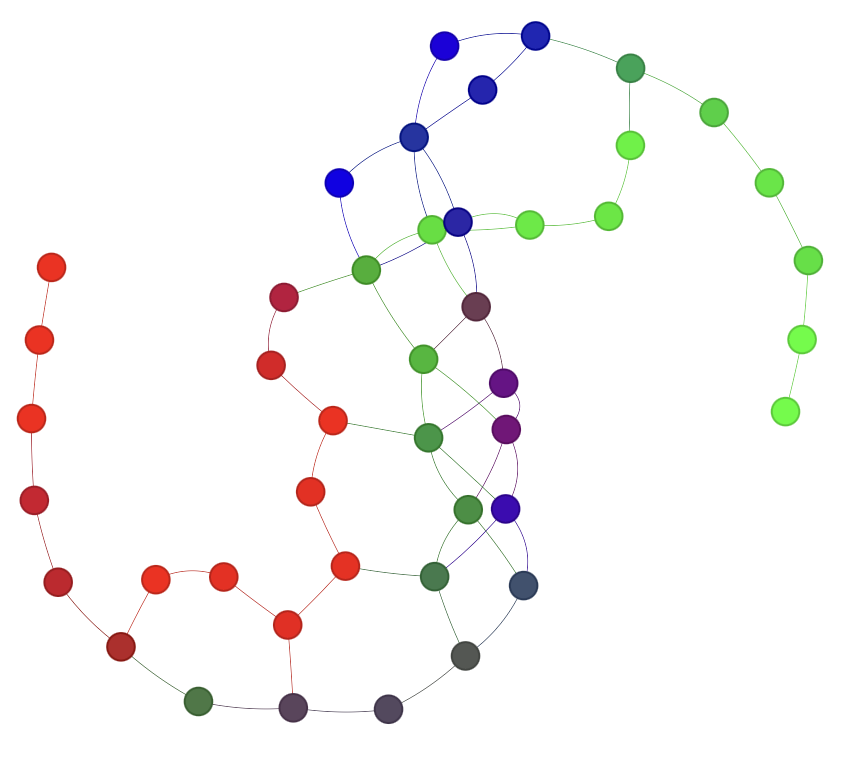}
    \caption{Standard Mapper graph computed using the optimized filter function, colored by mapping each phase to a color channel: Dox in green, Serum in blue and 2i in red.}
    \label{fig:ot_phases}
\end{figure}

\end{document}